\newif\ifseedonecolumn
\ifdefined\seedbuildonecolumn
  \seedonecolumntrue
\fi
\ifseedonecolumn
  \documentclass[]{bytedance_seed}
\else
  \documentclass[twocolumn]{bytedance_seed}
\fi

\usepackage{amsmath,amssymb,amsthm}
\usepackage{mathtools}
\usepackage{array}
\usepackage{url}
\usepackage{tikz}
\usetikzlibrary{arrows.meta,calc,positioning,fit,decorations.pathreplacing}

\newcommand{\R}{\mathbb{R}}
\newcommand{\E}{\mathbb{E}}
\newcommand{\Prob}{\mathbb{P}}
\newcommand{\ones}{\mathbf{1}}
\newcommand{\cpos}{\mathcal{C}_{+}}
\newcommand{\norm}[1]{\left\lVert #1 \right\rVert}
\newcommand{\inner}[2]{\left\langle #1, #2 \right\rangle}

\newtheorem{theorem}{Theorem}
\newtheorem{proposition}[theorem]{Proposition}

\newtheorem{lemma}[theorem]{Lemma}
\theoremstyle{definition}

\theoremstyle{remark}

\makeatletter
\expandafter\let\expandafter\seed@origfigurestar\csname figure*\endcsname
\expandafter\let\expandafter\seed@endorigfigurestar\csname endfigure*\endcsname
\expandafter\let\expandafter\seed@origtablestar\csname table*\endcsname
\expandafter\let\expandafter\seed@endorigtablestar\csname endtable*\endcsname

\newenvironment{seedfigurewide}[1][t]{%
  \ifseedtwocolumnlayout
    \seed@origfigurestar[#1]%
  \else
    \figure[#1]%
  \fi
}{%
  \ifseedtwocolumnlayout
    \seed@endorigfigurestar
  \else
    \endfigure
  \fi
}

\newenvironment{seedtablewide}[1][t]{%
  \ifseedtwocolumnlayout
    \seed@origtablestar[#1]%
  \else
    \table[#1]%
  \fi
}{%
  \ifseedtwocolumnlayout
    \seed@endorigtablestar
  \else
    \endtable
  \fi
}

\expandafter\let\csname figure*\endcsname\seedfigurewide
\expandafter\let\csname endfigure*\endcsname\endseedfigurewide
\expandafter\let\csname table*\endcsname\seedtablewide
\expandafter\let\csname endtable*\endcsname\endseedtablewide

\newcommand{\seedinputappendix}{%
  \let\seedSavedAppendix\appendix
  \renewcommand{\appendix}{}%
\appendix

\section{Proofs of Theoretical Results}
\label{app:proofs}

\subsection{Deep (non-shared) residual stacks}
\label{app:deep}

For comparison, consider
\[
  \begin{aligned}
  h_{\ell+1}&=h_\ell+\varepsilon W_\ell\phi(h_\ell),\\
  W_\ell&\ \text{independent across }\ell.
  \end{aligned}
\]
Let $R_\ell=d^{-1/2}\norm{h_\ell}_2$. Expanding one step gives
\[
  \begin{aligned}
  R_{\ell+1}^2
  &=
  R_\ell^2
  +
  \frac{2\varepsilon}{d}\inner{h_\ell}{W_\ell\phi(h_\ell)}\\
  &\quad+
  \frac{\varepsilon^2}{d}\norm{W_\ell\phi(h_\ell)}_2^2.
  \end{aligned}
\]
Because $h_\ell$ depends on $W_{<\ell}$ but not on $W_\ell$, the cross term
has zero conditional mean given the past. To make the scale of the last term
explicit, take the standard mean-field parameterization
$W_\ell=(\sigma_W/\sqrt d)G_\ell$, with the entries of $G_\ell$ independent
standard Gaussians. Conditional on $h_\ell$,
\[
  \E\left[
  \frac{1}{d}\norm{W_\ell\phi(h_\ell)}_2^2
  \,\middle|\, h_\ell
  \right]
  =
  \sigma_W^2\frac{1}{d}\norm{\phi(h_\ell)}_2^2.
\]
Under the usual ReLU mean-field scaling,
$d^{-1}\norm{\phi(h_\ell)}_2^2=\Theta(R_\ell^2)$, so the last term
contributes $\Theta(\varepsilon^2R_\ell^2)$ and
\[
  \E[R_{\ell+1}^2]
  \approx
  (1+c\varepsilon^2)\E[R_\ell^2],\qquad c=\Theta(1).
\]
Iterating this recursion:
\[
  \E[R_L^2]
  \approx
  (1+c\varepsilon^2)^L R_0^2
  \approx
  \exp(cL\varepsilon^2)R_0^2.
\]
With $\varepsilon=L^{-\alpha}$:
\[
  \E[R_L^2]\approx \exp(cL^{1-2\alpha})R_0^2.
\]
Hence a bounded residual-stream norm for large $L$ requires $\alpha\ge \tfrac12$.

\subsection{Proof of looped residual-stream norm scaling}
\label{app:theory}

For $x\in\R^d$, write
\[
  \begin{aligned}
  q(x)&:=\frac{1}{d}\norm{x}_2^2,\\
  m(x)&:=\frac{1}{d}\ones^\top x
  \quad\text{when }x\in\cpos.
  \end{aligned}
\]
The key point is that ReLU activations are nonnegative, so loop reuse creates an exact factorization through a vector in the positive cone.

\begin{lemma}[Positive-cone mass]
\label{lem:positive-cone-mass}
Let $u_0,\ldots,u_{N-1}\in\cpos$ and $U_N=\sum_{n=0}^{N-1}u_n$. If
\[
  \begin{aligned}
  \frac{1}{N}\sum_{n=0}^{N-1}m(u_n)&\ge m_->0,\\
  \frac{1}{N}\sum_{n=0}^{N-1}q(u_n)&\le q_+<\infty,
  \end{aligned}
\]
then
\begin{equation}
  m_-N
  \le
  \frac{1}{\sqrt d}\norm{U_N}_2
  \le
  \sqrt{q_+}N.
  \label{eq:app-UN-linear}
\end{equation}
\end{lemma}

\begin{proof}
For the lower bound, since $U_N\in\cpos$,
\[
  \norm{U_N}_2^2
  \ge
  \frac{1}{d}(\ones^\top U_N)^2.
\]
Dividing by $d$ gives $d^{-1}\norm{U_N}_2^2\ge(\sum_n m(u_n))^2\ge m_-^2N^2$. For the upper bound,
\[
  \begin{aligned}
  \frac{1}{\sqrt d}\norm{U_N}_2
  &\le
  \sum_{n=0}^{N-1}\sqrt{q(u_n)}
  \le
  \sqrt{N\sum_{n=0}^{N-1}q(u_n)}
  \\
  &\le
  \sqrt{q_+}N.
  \end{aligned}
\]
\end{proof}

\begin{lemma}[Gaussian gain on the positive cone]
\label{lem:cone-gain}
Let $G\in\R^{d\times d}$ have independent $\mathcal N(0,1)$ entries and set $W=(\sigma_W/\sqrt d)G$. For any $0<\gamma<1-1/\sqrt2$, define $a_\gamma=1-1/\sqrt2-\gamma>0$. With probability at least
\[
  1-\exp(-\gamma^2d/2)-2\exp(-d/2),
\]
the following holds for every $x\in\cpos$:
\begin{equation}
  a_\gamma\sigma_W\norm{x}_2
  \le
  \norm{Wx}_2
  \le
  3\sigma_W\norm{x}_2.
  \label{eq:app-cone-gain}
\end{equation}
\end{lemma}

\begin{proof}
The upper bound follows from the standard Gaussian operator-norm estimate
$\Prob\{\norm{G}_{\mathrm{op}}>3\sqrt d\}\le 2e^{-d/2}$. For the lower bound, Gordon's escape-through-a-mesh theorem~\citep{gordon1988milman} gives, with probability at least $1-e^{-t^2/2}$,
\[
  \inf_{x\in\cpos\cap S^{d-1}}\norm{Gx}_2
  \ge
  \sqrt d-w(\cpos\cap S^{d-1})-t,
\]
where $w(\cdot)$ is Gaussian width. For the positive cone,
\[
  \begin{aligned}
  w(\cpos\cap S^{d-1})
  &=
  \E\norm{g_+}_2
  \le
  \sqrt{\E\norm{g_+}_2^2}
  \\
  &=
  \sqrt{d/2}.
  \end{aligned}
\]
Taking $t=\gamma\sqrt d$ and multiplying by $\sigma_W/\sqrt d$ gives the lower bound.
\end{proof}

\subsubsection{Proof of Theorem~\ref{thm:alpha1}}

\begin{proof}
Let $u_n=\phi(h_n)$, $U_N=\sum_{n=0}^{N-1}u_n$, and $S_N=\sum_{n=0}^{N-1}r_n$. Since $W$ is tied across loop steps,
\begin{equation}
  S_N
  =
  \sum_{n=0}^{N-1}W u_n
  =
  WU_N.
  \label{eq:app-tied-factorization}
\end{equation}
By Lemma~\ref{lem:positive-cone-mass}, $d^{-1/2}\norm{U_N}_2=\Theta(N)$. Applying the cone-gain condition to $U_N\in\cpos$ gives
\[
  \frac{1}{d}\norm{S_N}_2^2
  =
  \frac{1}{d}\norm{WU_N}_2^2
  =
  \Theta(N^2).
\]
Since $h_N=h_0+\varepsilon S_N$ and $R_N=d^{-1/2}\norm{h_N}_2$,
\[
  \left(\varepsilon cN-R_0\right)_+
  \le
  R_N
  \le
  R_0+C\varepsilon N
\]
for constants $c,C>0$ independent of $N$. Thus $R_N=O(1)$ when $R_0=O(1)$ and $\varepsilon N=O(1)$, while $R_N$ diverges if $R_0=O(1)$ and $\varepsilon N\to\infty$. For $\varepsilon=N^{-\alpha}$, a bounded residual-stream norm therefore requires $\alpha\ge1$.

The same argument also explains the empirical positive-overlap pattern:
\[
  \begin{aligned}
  \frac{1}{N^2}\sum_{n,m=0}^{N-1}
  \frac{1}{d}\inner{u_n}{u_m}
  &=
  \frac{1}{N^2}\frac{1}{d}\norm{U_N}_2^2\\
  &=
  \Theta(1),
  \end{aligned}
\]
so positive average overlap is a consequence of the positive-cone trajectory condition rather than a separate primitive assumption.
\end{proof}

\subsection{Proof of loop-wise learning-rate scaling}
\label{app:lr}

Fix one update $t\to t+1$ and write $W^+=W+\Delta W$. Let $h_n^+$ denote the post-update trajectory initialized from the same $h_0$, and let $\Delta h_n=h_n^+-h_n$. Subtracting the two tied recursions gives
\begin{equation}
  \begin{aligned}
  \Delta h_{n+1}
  &=
  \Delta h_n
  +\varepsilon\Delta W\phi(h_n^+)\\
  &\quad
  +\varepsilon W\bigl(\phi(h_n^+)-\phi(h_n)\bigr).
  \end{aligned}
  \label{eq:app-delta-rec}
\end{equation}

\subsubsection{Sufficient upper bound}

Assume $\norm{W}_{\mathrm{op}}\le C_W$, $\norm{\Delta W}_{\mathrm{op}}\le C_\Delta\eta$, $\max_{n<N}d^{-1/2}\norm{\phi(h_n^+)}_2\le Q_+$, and $\varepsilon C_WN\le M$. Since ReLU is $1$-Lipschitz, \eqref{eq:app-delta-rec} implies, with $a_n=\norm{\Delta h_n}_2$,
\[
  a_{n+1}
  \le
  (1+\varepsilon C_W)a_n
  +
  \varepsilon C_\Delta\eta Q_+\sqrt d.
\]
Iterating from $a_0=0$ yields
\[
  \frac{1}{\sqrt d}\norm{\Delta h_N}_2
  \le
  C_\Delta Q_+e^M\eta\varepsilon N.
\]
Therefore $\eta\varepsilon N=O(1)$ is sufficient for an $O(1)$ width-normalized one-step output perturbation.

\subsubsection{Trajectory-direction sharpness}

The sufficient bound becomes sharp only when the actual optimizer update has nondegenerate gain on the loop-summed trajectory direction.
Empirically, this $\eta\,\varepsilon\,N$ scaling persists across the first 10 optimizer steps for $\varepsilon\in\{1,1/\!\sqrt N,1/N\}$; see Appendix~\ref{app:lr-multistep} for the multi-step verification.

\noindent
Formally, let
\[
  \begin{aligned}
  U_N&=\sum_{n=0}^{N-1}\phi(h_n),\\
  \zeta_N
  &:=
  \frac{\norm{\Delta WU_N}_2}{\eta\norm{U_N}_2},
  \qquad U_N\ne0,
  \end{aligned}
\]
and decompose
\[
  \Delta h_N
  =
  \varepsilon\Delta WU_N+E_N^{\mathrm{nl}}.
\]
If the activation-mass condition of Lemma~\ref{lem:positive-cone-mass} holds and
\[
  \norm{E_N^{\mathrm{nl}}}_2\le\theta\norm{\varepsilon\Delta WU_N}_2
  \qquad\text{for some }\theta<1,
\]
then
\[
  \frac{1}{d}\norm{\Delta h_N}_2^2
  =
  \Theta\!\left(\zeta_N^2(\eta\varepsilon N)^2\right).
\]
In particular, if $\zeta_N=\Theta(1)$ over the update range of interest, the stable scale is sharp and $\varepsilon=N^{-\alpha}$ gives $\eta\propto N^{\alpha-1}$.

\subsection{Multi-layer block scaling proof (Theorem~\ref{thm:Llayer})}
\label{app:Llayer}

Recall the recursion from Equation~\eqref{eq:Llayer-recursion}:
\[
  \begin{aligned}
  h_{n,\ell+1}
  &= h_{n,\ell}+\varepsilon W_\ell\phi(h_{n,\ell}),\\
  \ell&=0,\dots,L-1,\qquad n=0,\dots,N-1,
  \end{aligned}
\]
with $h_{n+1,0} = h_{n,L}$ and $h_{0,0} = h_0$. The final output is
\[
  h_{\text{out}}
  =
  h_0+\varepsilon\sum_{\ell=0}^{L-1}
  \left[
    W_\ell\sum_{n=0}^{N-1}\phi(h_{n,\ell})
  \right].
\]

Define the normalized loop-averaged activation and branch output
\[
  \begin{aligned}
  \bar U_\ell &:= \frac{1}{N}\sum_{n=0}^{N-1}\phi(h_{n,\ell}),\\
  Y_\ell &:= W_\ell \bar U_\ell,\\
  G_\ell &:= N Y_\ell.
  \end{aligned}
\]
Let $\beta:=\varepsilon N$.

\paragraph{Assumptions.}
First, assume normalized branch squared norms are nondegenerate: for constants $0<a_-\le a_+<\infty$ independent of $N,L,d$,
\begin{equation}
  a_-
  \le
  \E\left[\frac{1}{d}\norm{Y_\ell}_2^2\right]
  \le
  a_+
  \qquad \text{for all }\ell.
  \label{eq:app-branch-energy}
\end{equation}
This is the $N^{-2}$-normalized analogue of the single-layer loop bound.

Second, assume single-layer replacement stability. Let
\[
  \begin{aligned}
  \mathcal F_{-\ell}&:=\sigma(W_s:s\ne \ell),\\
  B_\ell&:=\E[Y_\ell\mid \mathcal F_{-\ell}],\\
  M_\ell&:=Y_\ell-B_\ell.
  \end{aligned}
\]
For $s\ne\ell$, let $Y_\ell^{(s)}$ denote the same normalized branch output after replacing $W_s$ by an independent copy and recomputing the full trajectory. We assume that for constants $C_0,C_1$ independent of $N,L,d$,
\begin{align}
  \E\left[\frac{1}{d}\norm{B_\ell}_2^2\right]
  &\le C_0\beta^2,
  \label{eq:app-drift-assumption}\\
  \frac{1}{2}\E\left[\frac{1}{d}\norm{Y_\ell-Y_\ell^{(s)}}_2^2\right]
  &\le C_1\beta^2.
  \label{eq:app-replacement-assumption}
\end{align}
This condition says that one unique layer can affect a normalized branch output only through its total residual strength $\varepsilon N=\beta$. It is a local influence condition, not a direct assumption that cross-layer inner products are small.
Together, these hypotheses rule out a strong layer-wise coupling mode in which the shared residual stream forces all normalized branch outputs to move coherently, which would reintroduce an $L^2$ contribution after the within-layer $N^2$ growth has been normalized.

\begin{theorem}[Multi-layer block scaling; formal restatement of Theorem~\ref{thm:Llayer}]
\label{thm:Llayer-formal}
Under assumptions~\eqref{eq:app-branch-energy}--\eqref{eq:app-replacement-assumption}:
\[
  \begin{aligned}
  \E\left[
    \frac{1}{d}\left\|\sum_{\ell=0}^{L-1}G_\ell\right\|_2^2
  \right]
  &\le C N^2\left(L+\beta^2L^2\right),\\
  \beta&=\varepsilon N.
  \end{aligned}
\]
The factorized parameterization $\varepsilon = \lambda/(N\sqrt{L})$ yields
\[
  \varepsilon^2\E\left[d^{-1}\left\|\sum_{\ell}G_\ell\right\|_2^2\right]
  =
  O(\lambda^2+\lambda^4),
\]
with constants independent of $N$ and $L$.
With the matching nondegeneracy condition and sufficiently small fixed $\lambda$, the unscaled branch squared norm is $\Theta(LN^2)$.
\end{theorem}

\begin{proof}

\noindent\textbf{Upper bound.}
We prove
\begin{equation}
  \E\left[
  \frac{1}{d}
  \left\|
  \sum_{\ell=0}^{L-1}Y_\ell
  \right\|_2^2
  \right]
  \le
  C\left(L+\beta^2L^2\right).
  \label{eq:app-Y-bound}
\end{equation}
Multiplying by $N^2$ gives the theorem's bound for $\sum_\ell G_\ell$.

The drift part is controlled directly by \eqref{eq:app-drift-assumption}:
\[
  \begin{aligned}
  \E\left[
  \frac{1}{d}
  \left\|
  \sum_{\ell=0}^{L-1}B_\ell
  \right\|_2^2
  \right]
  &\le
  \left(
  \sum_{\ell=0}^{L-1}
  \sqrt{\E[d^{-1}\norm{B_\ell}_2^2]}
  \right)^2\\
  &\le
  C_0\beta^2L^2.
  \end{aligned}
\]

It remains to bound the centered part $\sum_\ell M_\ell$. Use the Hoeffding--ANOVA decomposition with respect to the independent weights $W_0,\dots,W_{L-1}$~\citep{DBLP:books/daglib/0035704}:
\[
  M_\ell=\sum_{A\subseteq[L]}(M_\ell)_A,
\]
where different index sets are orthogonal in $L^2$. Since $\E[M_\ell\mid\mathcal F_{-\ell}]=0$, every nonzero component of $M_\ell$ contains index $\ell$. Therefore, for $\ell\ne s$, only components containing both $\ell$ and $s$ contribute to $\E\langle M_\ell,M_s\rangle$.
Define
\[
  V_{\ell,s}
  :=
  \sum_{A\ni s}
  \E\left[\frac{1}{d}\norm{(M_\ell)_A}_2^2\right].
\]
By Cauchy--Schwarz and the Efron--Stein identity,
\[
  \left|
  \E\left[\frac{1}{d}\inner{M_\ell}{M_s}\right]
  \right|
  \le
  V_{\ell,s}^{1/2}V_{s,\ell}^{1/2}.
\]
Replacing one coordinate controls the ANOVA mass of all components containing that coordinate. Since conditional expectation is a contraction in $L^2$, \eqref{eq:app-replacement-assumption} also gives, up to a universal constant,
\[
  V_{\ell,s}\le C\beta^2
  \qquad (s\ne \ell).
\]
Thus each centered cross term is $O(\beta^2)$:
\begin{equation}
  \left|
  \E\left[\frac{1}{d}\inner{M_\ell}{M_s}\right]
  \right|
  \le
  C\beta^2
  \qquad (\ell\ne s).
  \label{eq:app-centered-cross}
\end{equation}
The diagonal terms obey
\[
  \begin{aligned}
  \E[d^{-1}\norm{M_\ell}_2^2]
  &\le
  2\E[d^{-1}\norm{Y_\ell}_2^2]\\
  &\quad+
  2\E[d^{-1}\norm{B_\ell}_2^2]\\
  &\le
  C.
  \end{aligned}
\]
Consequently,
\[
  \E\left[
  \frac{1}{d}
  \left\|
  \sum_{\ell=0}^{L-1}M_\ell
  \right\|_2^2
  \right]
  \le
  C\left(L+\beta^2L^2\right).
\]
Combining the centered and drift bounds with
$\norm{\sum_\ell Y_\ell}_2^2\le 2\norm{\sum_\ell M_\ell}_2^2+2\norm{\sum_\ell B_\ell}_2^2$
proves \eqref{eq:app-Y-bound}.

\smallskip
\noindent\textbf{Consequences for factorized scaling.}
Since $G_\ell=NY_\ell$,
\[
  \E\left[
  \frac{1}{d}
  \left\|
  \sum_{\ell=0}^{L-1}G_\ell
  \right\|_2^2
  \right]
  \le
  C N^2\left(L+\beta^2L^2\right).
\]
With $\varepsilon=\lambda/(N\sqrt L)$, we have $\beta=\lambda/\sqrt L$, so
\[
  \begin{aligned}
  &\varepsilon^2
  \E\left[
    \frac{1}{d}
    \left\|\sum_{\ell=0}^{L-1}G_\ell\right\|_2^2
  \right]\\
  &\qquad\le
  \frac{\lambda^2}{N^2L}\,C N^2(L+\lambda^2L)
  =
  O(\lambda^2+\lambda^4).
  \end{aligned}
\]
For the matching lower bound, define
\[
  \begin{aligned}
  S_M&:=\sum_{\ell=0}^{L-1}M_\ell,\\
  S_B&:=\sum_{\ell=0}^{L-1}B_\ell,\\
  S_Y&:=\sum_{\ell=0}^{L-1}Y_\ell=S_M+S_B.
  \end{aligned}
\]
The drift bound above gives
\[
  \E[d^{-1}\norm{S_B}_2^2]\le C\beta^2L^2.
\]
For the centered diagonal terms, the inequality
$\norm{Y_\ell-B_\ell}_2^2\ge \frac12\norm{Y_\ell}_2^2-\norm{B_\ell}_2^2$
and assumptions \eqref{eq:app-branch-energy}--\eqref{eq:app-drift-assumption} imply
\[
  \sum_{\ell=0}^{L-1}\E[d^{-1}\norm{M_\ell}_2^2]
  \ge
  \frac{a_-}{2}L-C\beta^2L.
\]
Together with the centered cross bound \eqref{eq:app-centered-cross}, this yields
\[
  \E[d^{-1}\norm{S_M}_2^2]
  \ge
  cL-C\beta^2L^2
\]
for constants $c,C>0$. Finally,
$\norm{S_M+S_B}_2^2\ge \frac12\norm{S_M}_2^2-\norm{S_B}_2^2$, so
\[
  \E[d^{-1}\norm{S_Y}_2^2]
  \ge
  cL-C\beta^2L^2.
\]
Hence, when $\beta=\lambda/\sqrt L$ and $\lambda$ is below a constant threshold depending on $c$ and $C$,
\[
  \E\left[
  \frac{1}{d}
  \left\|
  \sum_{\ell=0}^{L-1}Y_\ell
  \right\|_2^2
  \right]
  =
  \Theta(L),
\]
and therefore
\[
  \E\left[\frac{1}{d}\left\|\sum_{\ell=0}^{L-1} G_\ell\right\|_2^2\right]
  =
  \Theta(LN^2).
\]
\end{proof}

\subsection{Multi-layer learning-rate scaling proof}
\label{app:Llayer-lr}

Theorem~\ref{thm:Llayer-formal} controls the forward residual-stream norm.
The learning-rate rule in Section~\ref{sec:multilayer} concerns a different object: the one-step sensitivity of the final output to simultaneous updates of the reused matrices.
We formalize that calculation in the linearized maximal-update sense.

Fix one optimizer step and write $W_\ell^+=W_\ell+\Delta W_\ell$.
For each occurrence $(n,\ell)$, let $\mathcal J_{n,\ell}$ denote the Jacobian, along the pre-update trajectory, of the map from an additive perturbation inserted after the residual branch at $h_{n,\ell+1}$ to the final output $h_{\mathrm{out}}$.
The first-order output perturbation is
\begin{equation}
  \Delta h_{\mathrm{out}}^{\mathrm{lin}}
  =
  \varepsilon
  \sum_{\ell=0}^{L-1}
  \sum_{n=0}^{N-1}
  \mathcal J_{n,\ell}\Delta W_\ell\phi(h_{n,\ell}).
  \label{eq:app-Llayer-lr-linearized}
\end{equation}

\begin{proposition}[Multi-layer learning-rate scaling]
\label{prop:Llayer-lr}
Assume the factorized residual scaling $\varepsilon=\lambda/(N\sqrt L)$ and the following tangent-stability conditions hold for constants independent of $N,L,d$:
\begin{align}
  \max_{n,\ell}
  \frac{1}{\sqrt d}
  \norm{\Delta W_\ell\phi(h_{n,\ell})}_2
  &\le C_\Delta\eta,
  \label{eq:app-Llayer-lr-update-scale}\\
  \max_{n,\ell}\norm{\mathcal J_{n,\ell}}_{\mathrm{op}}
  &\le C_J.
  \label{eq:app-Llayer-lr-jacobian}
\end{align}
Then
\[
  \frac{1}{\sqrt d}
  \norm{\Delta h_{\mathrm{out}}^{\mathrm{lin}}}_2
  \le
  C_JC_\Delta\,\eta\varepsilon NL
  =
  C_JC_\Delta\,\eta\lambda\sqrt L.
\]
Consequently, $\eta=O((\lambda\sqrt L)^{-1})$ is sufficient for an $O(1)$ width-normalized linearized output perturbation.
If, in addition, the layer-update contributions are coherent in the sense that for some $c_{\mathrm{coh}}>0$,
\begin{equation}
  \frac{1}{\sqrt d}
  \left\|
  \sum_{\ell=0}^{L-1}
  \sum_{n=0}^{N-1}
  \mathcal J_{n,\ell}\Delta W_\ell\phi(h_{n,\ell})
  \right\|_2
  \ge
  c_{\mathrm{coh}}\eta NL,
  \label{eq:app-Llayer-lr-coherence}
\end{equation}
then the bound is sharp up to constants:
\[
  \frac{1}{\sqrt d}
  \norm{\Delta h_{\mathrm{out}}^{\mathrm{lin}}}_2
  =
  \Omega(\eta\lambda\sqrt L).
\]
\end{proposition}

\begin{proof}
Using \eqref{eq:app-Llayer-lr-linearized}, write $v_{n,\ell}:=\mathcal J_{n,\ell}\Delta W_\ell\phi(h_{n,\ell})$.
The triangle inequality and the two tangent-stability assumptions give
\[
  \begin{aligned}
  \frac{1}{\sqrt d}
  \norm{\Delta h_{\mathrm{out}}^{\mathrm{lin}}}_2
  &\le
  \varepsilon
  \sum_{\ell=0}^{L-1}
  \sum_{n=0}^{N-1}
  \frac{1}{\sqrt d}
  \norm{v_{n,\ell}}_2\\
  &\le
  \varepsilon
  \sum_{\ell=0}^{L-1}
  \sum_{n=0}^{N-1}
  C_JC_\Delta\eta\\
  &=
  C_JC_\Delta\eta\varepsilon NL.
  \end{aligned}
\]
Substituting $\varepsilon=\lambda/(N\sqrt L)$ gives the claimed $O(\eta\lambda\sqrt L)$ upper bound.
Thus choosing $\eta=O((\lambda\sqrt L)^{-1})$ keeps the width-normalized linearized output update bounded.

For sharpness, combine \eqref{eq:app-Llayer-lr-linearized} with the coherence condition \eqref{eq:app-Llayer-lr-coherence}:
\[
  \frac{1}{\sqrt d}
  \norm{\Delta h_{\mathrm{out}}^{\mathrm{lin}}}_2
  \ge
  c_{\mathrm{coh}}\eta\varepsilon NL
  =
  c_{\mathrm{coh}}\eta\lambda\sqrt L.
\]
Therefore, in coherent regimes, taking $\eta\lambda\sqrt L\to\infty$ produces a diverging linearized output perturbation, so the stable scale is $\eta=\Theta((\lambda\sqrt L)^{-1})$.
\end{proof}

For finite optimizer steps, the same scaling applies to the actual output difference whenever the nonlinear Taylor remainder is smaller than a constant multiple of the linearized term, matching the trajectory-direction sharpness condition used in Appendix~\ref{app:lr} for the single-layer loop.
If the layer contributions are less coherent than \eqref{eq:app-Llayer-lr-coherence}, the upper bound can be loose; this is the sense in which less coherent regimes can tolerate larger learning rates.

\section{Experimental Setup and Implementation Details}
\label{app:experimental-details}

\subsection{Language modeling experiment configuration}
\label{app:model-config}

Table~\ref{tab:lm-model-config} lists the model configurations and training hyperparameters for the language modeling depth--loop transfer experiments reported in the main text.

\begin{table*}[t]
\centering
\footnotesize
\setlength{\tabcolsep}{4pt}
\begin{tabular}{@{}cccccccc@{}}
\toprule
Unique layers $L$ & Loop counts $N$ & $d_{\mathrm{model}}$ & Attn/KV heads & Head dim & MLP dim & Vocab size & Params (tied) \\
\midrule
12 & $\{1,2,4,8\}$ & 768 & 12/12 & 64 & 2048 & 128256 & 183.5M \\
24 & $\{1,2,4,8\}$ & 768 & 12/12 & 64 & 2048 & 128256 & 268.4M \\
48 & $\{1,2,4,8\}$ & 768 & 12/12 & 64 & 2048 & 128256 & 438.3M \\
\bottomrule
\end{tabular}
\caption{
\textbf{Model configurations for the main depth--loop transfer experiments.}
``KV'' denotes key/value attention heads (each row uses 12 attention heads and 12 key/value heads, i.e.\ no GQA grouping).
All models are decoder-only Llama-style pre-norm Transformers with RMSNorm and the Llama~3 tokenizer vocabulary~\citep{DBLP:conf/nips/VaswaniSPUJGKP17,DBLP:journals/corr/abs-2302-13971,dubey2024llama,DBLP:conf/icml/XiongYHZZXZLWL20,DBLP:conf/nips/ZhangS19a}.
The token embedding and tied LM head are outside the loop~\citep{DBLP:conf/eacl/PressW17}, while the whole Transformer stack is repeated $N$ times.
Training uses FineWeb-Edu for 20{,}000 optimizer steps (10B tokens, 0.5M-token global batch), AdamW~\citep{DBLP:conf/iclr/LoshchilovH19} with $(\beta_1,\beta_2)=(0.9,0.95)$ and weight decay $\omega_0$ (see Appendix~\ref{app:hparam-values}), and a warmup-stable-linear-decay schedule~\citep{DBLP:journals/corr/abs-2410-05192} with 500 warmup steps and a final 1{,}000-step decay.
}
\label{tab:lm-model-config}
\end{table*}

\subsection{Depth--loop parameterization}
\label{app:completep-loop-table}

This subsection spells out the implementation-facing parameterization
corresponding to the single-stage case of the theory.  We consider an
$L$-layer pre-LN Transformer block looped $N$ times: there are $L$ unique Transformer blocks, each with two residual branches, and every unique layer is reused $N$ times.  Let
\[
  m_L := \frac{L}{12},
\]
where 12 is the reference unique depth used in our baseline tuning.  Width
scaling terms are set to one, so the table below isolates the depth and loop
axes.  We write $\eta_0$ for the base learning rate at $L=12$,
$\sigma_0^2$ for the base initialization variance, $\omega_0$ for the base
AdamW weight decay, and $\epsilon_0$ for the base AdamW numerical epsilon.

\begin{table*}[t]
\centering
\footnotesize
\setlength{\tabcolsep}{3pt}
\begin{tabular}{@{}>{\raggedright\arraybackslash}p{0.30\textwidth}>{\raggedright\arraybackslash}p{0.64\textwidth}@{}}
\toprule
Quantity & Scaling rule \\
\midrule
Token embedding init. variance & $\sigma_0^2$ \\
Token embedding LR & $\eta_0$ \\
\midrule
Pre-LN init. variance & $\sigma_0^2$ \\
Pre-LN LR & $\eta_0\,m_L^{-1/2}$ \\
\midrule
Transformer hidden init. variance & $\sigma_0^2$ \\
Transformer hidden LR & $\eta_0\,m_L^{-1/2}$ \\
Transformer hidden bias LR & $\eta_0\,m_L^{-1/2}$ \\
Transformer hidden AdamW weight decay & $\omega_0$ \\
\midrule
Attention residual branch &
\(\begin{aligned}[t]
x_{n,\ell}
&+\lambda N^{-1}m_L^{-1/2}\\
&\cdot \mathrm{Attn}(\mathrm{LN}(x_{n,\ell}))
\end{aligned}\) \\
MLP residual branch &
\(\begin{aligned}[t]
z_{n,\ell}
&+\lambda N^{-1}m_L^{-1/2}\\
&\cdot \mathrm{MLP}(\mathrm{LN}(z_{n,\ell}))
\end{aligned}\) \\
\midrule
Final-LN init. variance & $\sigma_0^2$ \\
Final-LN LR & $\eta_0$ \\
\midrule
Unembedding init. variance & $\sigma_0^2$ \\
Unembedding LR & $\eta_0$ \\
Unembedding forward map & $h_{\mathrm{out}}W_{\mathrm{unemb}}^\top$ \\
\midrule
AdamW $\epsilon$ for residual blocks & $\epsilon_0\,m_L^{-1/2}$ \\
AdamW $\epsilon$ for embedding and unembedding & $\epsilon_0$ \\
\bottomrule
\end{tabular}
\caption{
Depth--loop parameterization for an $L$-layer block looped $N$ times.
The local loop correction is the factor $N^{-1}$ in each attention and MLP
residual branch.  The global depth correction is the factor $m_L^{-1/2}$,
with $m_L=L/12$.  Learning rates depend on the unique depth $L$ but not on
the loop count $N$ once the local residual correction has been applied.
}
\label{tab:completep-loop-param}
\end{table*}

\subsection{Hyperparameter values and sweep configuration}
\label{app:hparam-values}

Table~\ref{tab:repro} lists the numerical values used to instantiate the parameterization of Appendix~\ref{app:completep-loop-table} for the main language-modeling experiments (Sections~\ref{sec:lm-transfer}--\ref{sec:depth-scaling}) and the learning-rate sweep protocol behind Figure~\ref{fig:lr-transfer}.

\begin{table*}[t]
\centering
\footnotesize
\setlength{\tabcolsep}{6pt}
\begin{tabular}{@{}lcl@{}}
\toprule
Quantity & Symbol & Value \\
\midrule
Residual branch constant & $\lambda$ & $1$ \\
Base learning rate at $L{=}12$ & $\eta_0$ & $1.25\times10^{-3}$ \\
Initialization standard deviation & $\sigma_0$ & $0.02$ \\
AdamW weight decay & $\omega_0$ & $0.1$ \\
AdamW numerical epsilon & $\epsilon_0$ & $10^{-8}$ \\
\midrule
LR sweep grid (base $\eta_0$) & --- & $\{5,\,7.5,\,10,\,12.5,\,15,\,20,\,30,\,40\}\!\times\!10^{-4}$ \\
Divergence criterion & --- & final validation loss $>4$ \\
Random seed & --- & $0$ (single run) \\
Validation split & --- & FineWeb-Edu held-out \\
\bottomrule
\end{tabular}
\caption{
Numerical values used in the language-modeling experiments. Symbols refer to the parameterization in Table~\ref{tab:completep-loop-param}; per-component scaling rules apply as listed there. Initialization uses a truncated normal with standard deviation $\sigma_0$.
}
\label{tab:repro}
\end{table*}

\subsection{Post-training cosine similarity at smaller loop counts}
\label{app:cosine-small-N}

Figure~\ref{fig:update-cosine-appendix} extends the post-training cosine-similarity diagnostic of Figure~\ref{fig:update-cosine} in the main text from $N{=}8$ to $N{=}4$, using the corresponding $L\in\{12,24,48\}$ models trained at $N{=}4$.
The qualitative pattern matches the $N{=}8$ case: early and middle loop steps remain positively correlated across all three depths, while final-step pairs are weaker and can be near zero or negative, especially at $L{=}12$.

\begin{figure*}[t]
  \centering
  \includegraphics[width=\textwidth]{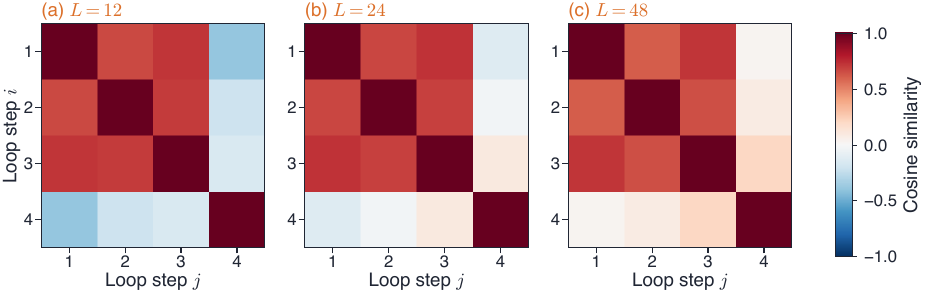}
  \caption{\textbf{Cosine similarity between loop-step increments after full training} ($N{=}4$).
  Companion to Figure~\ref{fig:update-cosine} ($N{=}8$).
  Early and middle loop steps remain positively correlated across all three depths. Final-step pairs are weaker and can be near zero or negative, especially at $L{=}12$, matching the pattern in Figure~\ref{fig:update-cosine}.}
  \label{fig:update-cosine-appendix}
\vspace{-0.3cm}
\end{figure*}
 
\subsection{Multi-step learning-rate transfer}
\label{app:lr-multistep}

Figure~\ref{fig:lr-step-update-appendix} extends the one-step learning-rate analysis of Figure~\ref{fig:lr-step-update} in the main text to the first 10 optimizer steps.
The $\eta\,\varepsilon\,N$ scaling identified in Theorem~\ref{thm:lr} persists across all measured steps for $\varepsilon\in\{1,1/\!\sqrt N,1/N\}$: the linear-scaled update stays within a small constant range, while the sqrt- and unscaled updates track the predicted power laws.

\begin{figure*}[t]
  \centering
  \includegraphics[width=\textwidth]{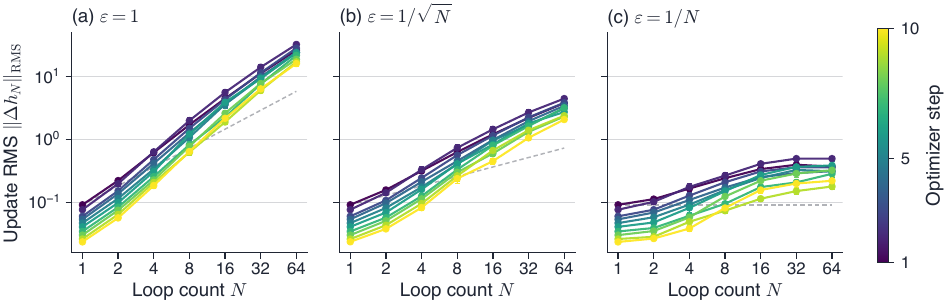}
  \caption{\textbf{Multi-step companion to Figure~\ref{fig:lr-step-update}: the $\eta\,\varepsilon\,N$ scaling persists across all 10 optimizer steps.}
  Each panel fixes one $\varepsilon$ rule and plots the per-step RMS hidden-state update $\lVert\Delta h_N\rVert_{\mathrm{RMS}}$ against the loop count $N$, with one curve per optimizer step $t{=}1,\dots,10$ (viridis ramp, mean $\pm$ standard error of the mean (SEM) over 10 seeds).
  All settings share the same base learning rate $10^{-4}$ (no per-rule LR rescaling).
  Dashed gray lines repeat the reference scalings from Figure~\ref{fig:lr-step-update}, anchored at $(N{=}1,\,t{=}1)$.
  In panels~(a) and~(b), the empirical curves run essentially parallel to their respective dashed references at every optimizer step, so the $\propto N$ and $\propto\!\sqrt{N}$ scalings hold throughout early training.
  In panel~(c), the linear-scaled update stays within roughly a $4{\times}$ range over $N{=}1{-}64$ and saturates near $N{=}32$, much weaker than the $\sqrt{N}$ or $N$ growth of the other rules; the residual growth reflects $\eta\,\varepsilon\,N$ being an asymptotic upper bound rather than a sharp scaling.
  Loss values decrease across optimizer steps, which compresses the absolute update magnitude over time without changing the cross-$N$ scaling.}
  \label{fig:lr-step-update-appendix}
\vspace{-0.3cm}
\end{figure*}
  \let\appendix\seedSavedAppendix
}
\makeatother

\title{On the Residual Scaling of Looped Transformers: Stability and Transferability}

\author[1,2,3]{Shaowen Wang}
\author[1,2,3]{Bingrui Li}
\author[2,3]{Ge Zhang}
\author[2]{\\Wenhao Huang}
\author[2]{Shen Yan}
\author[1]{Jian Li}

\affiliation[1]{Tsinghua University}
\affiliation[2]{ByteDance Seed}
\affiliation[3]{M-A-P}

\abstract{%
Looped (weight-tied) Transformers apply a shared residual block $N$ times ($h \leftarrow h + \varepsilon\,f(h)$, same $f$ at each step), increasing effective depth without adding parameters.
Prior depth-scaling analyses prescribe $\varepsilon = 1/\!\sqrt{L}$ for depth-$L$ residual networks.
We show that this is insufficient for looped architectures: weight sharing makes residual updates correlated across iterations, requiring the stronger scaling $\varepsilon = 1/N$.
For multi-layer blocks ($L$ unique layers looped $N$ times), we derive a factored parameterization $\varepsilon = \lambda/(N\!\sqrt{L})$ that separates the two sources of growth: $1/N$ controls the within-layer loop correlation, and $1/\!\sqrt{L}$ controls the across-layer variance.
A key consequence is that the optimal learning rate depends only on the number of unique layers $L$, not on the loop count $N$, enabling direct hyperparameter transfer from small to large $N$ without retuning.
Experiments on looped Transformers confirm that $1/N$ scaling improves trainability and yields better loss than $1/\!\sqrt{N}$ scaling across loop counts.
}

\date{\today}
\correspondence{\email{wangsw23@mails.tsinghua.edu.cn}, \email{zhangge.eli@bytedance.com}}

\begin{document}

\maketitle

\section{Introduction}

Looped (weight-tied) Transformers reuse a single block $f$ for $N$ iterations ($h \leftarrow h + f(h)$, same $f$ at each step), increasing effective depth without adding parameters.
This design appears in Universal Transformers~\citep{dehghani2018universal}, ALBERT~\citep{lan2019albert}, and recent work on algorithmic reasoning and latent computation~\citep{yang2023looped,fan2024looped,saunshi2025latent,gatmiry2024multistep,ng2024loopnn,zhu2025ouro}.

\begin{figure*}[t]
  \centering
  \resizebox{\textwidth}{!}{%
\begin{tikzpicture}[
  x=1cm, y=0.90cm,
  font=\sffamily,
  >={Latex[length=2mm,width=1.4mm]},
  paneldeep/.style ={rounded corners=4pt, draw=none, fill=deepblue!4},
  paneloop/.style  ={rounded corners=4pt, draw=none, fill=looporange!5},
  arch/.style={rounded corners=2.5pt, draw=#1!72!black, fill=#1!12,
    line width=0.7pt, minimum width=0.95cm, minimum height=0.62cm,
    align=center, inner sep=1.5pt, font=\sffamily\small},
  archwide/.style={rounded corners=3pt, draw=#1!72!black, fill=#1!12,
    line width=0.85pt, minimum width=1.30cm, minimum height=0.84cm,
    align=center, inner sep=2pt, font=\sffamily\small},
  flow/.style     ={->, draw=ink!45, line width=0.55pt},
  stepd/.style    ={->, draw=deepblue!72!black,  line width=0.85pt,
                    shorten >=0.4pt, shorten <=0.4pt},
  stepl/.style    ={->, draw=looporange!75!black, line width=0.85pt,
                    shorten >=0.4pt, shorten <=0.4pt},
  sumarr/.style   ={-{Latex[length=2.4mm,width=1.9mm]},
                    draw=ink!92, line width=1.4pt},
  labelarr/.style ={->, draw=rescale!80, line width=0.45pt,
                    shorten >=1pt, shorten <=0.5pt},
  ttl/.style    ={font=\sffamily\bfseries\large, text=ink},
  tag/.style    ={font=\sffamily\itshape\small,  text=ink!60},
  body/.style   ={font=\sffamily\small,           text=ink!85},
  math/.style   ={font=\sffamily\small,           text=ink!90},
  rscale/.style ={font=\sffamily\footnotesize,    text=rescale!85!black},
  verdict/.style={font=\sffamily\bfseries\small,  text=#1},
]

\definecolor{deepblue}  {RGB}{ 56, 76,190}
\definecolor{looporange}{RGB}{220,112, 42}
\definecolor{ink}       {RGB}{ 32, 38, 52}
\definecolor{rescale}   {RGB}{170, 80, 30}

\node[paneldeep, minimum width=7.55cm, minimum height=6.05cm, anchor=south west]
  at (0.00, 0.38) {};
\node[paneloop, minimum width=7.55cm, minimum height=6.05cm, anchor=south west]
  at (8.45, 0.38) {};

\begin{scope}[shift={(0,0.05)}]
  \node[ttl, anchor=west] at (0.35, 6.55) {(a)~Deep network};
  \node[tag, anchor=west] at (0.35, 6.00) {$L$ distinct weight matrices};

  \node[arch=deepblue] (b0) at (1.20, 5.30) {$W_0$};
  \node[arch=deepblue] (b1) at (2.40, 5.30) {$W_1$};
  \node[arch=deepblue] (b2) at (3.60, 5.30) {$W_2$};
  \node[font=\sffamily\bfseries, text=ink!45] at (4.80, 5.30) {$\cdots$};
  \node[arch=deepblue] (b3) at (6.10, 5.30) {$W_{L-1}$};
  \draw[flow] (b0.east) -- (b1.west);
  \draw[flow] (b1.east) -- (b2.west);
  \draw[flow] (b2.east) -- ++(0.55,0);
  \draw[flow] ($(b3.west)+(-0.55,0)$) -- (b3.west);

  \node[math, anchor=west] at (0.55, 4.40)
    {$h_{\ell+1} = h_\ell + {\color{rescale}\varepsilon}\, r_\ell$};
  \node[rscale, anchor=west] at (0.55, 3.85)
    {${\color{rescale}\varepsilon}$: residual rescale,~~$r_\ell = W_\ell\,\phi(h_\ell)$};

  \begin{scope}[shift={(1.50, 2.10)}]
    \coordinate (s0) at (0.00, 0.00);
    \coordinate (s1) at (0.62, 0.22);
    \coordinate (s2) at (0.32, 0.88);
    \coordinate (s3) at (1.05, 0.95);
    \coordinate (s4) at (1.10, 0.20);
    \coordinate (s5) at (0.60, 0.45);
    \draw[stepd] (s0) -- (s1);
    \draw[stepd] (s1) -- (s2);
    \draw[stepd] (s2) -- (s3);
    \draw[stepd] (s3) -- (s4);
    \draw[stepd] (s4) -- (s5);
    \draw[sumarr] (s0) -- (s5);
    \node[circle, fill=ink!90, inner sep=0pt, minimum size=2.6pt] at (s0) {};
  \end{scope}

  \node[math, anchor=west] at (3.55, 2.85)
    {$\displaystyle\bigl\|\textstyle\sum_{\ell} r_\ell\bigr\| = \Theta(\sqrt{L})$};
  \node[math, anchor=west, text=ink!75] at (3.55, 2.30)
    {$\displaystyle\bigl\|\textstyle\sum_{\ell} r_\ell\bigr\|^2 = \Theta(L)$};

  \node[verdict=deepblue!80!black, anchor=west] at (0.35, 1.35)
    {Random-walk norm growth};
  \node[math, anchor=west] at (0.35, 0.90)
    {standard scaling suffices:\;\;${\color{rescale}\varepsilon}=\lambda/\sqrt{L}$};
\end{scope}

\begin{scope}[shift={(8.45,0.05)}]
  \node[ttl, anchor=west] at (0.35, 6.55) {(b)~Looped network};
  \node[tag, anchor=west] at (0.35, 6.00) {single shared $W$, reused $N$ times};

  \node[archwide=looporange] (lw) at (3.70, 5.25) {$W$};
  \draw[-{Latex[length=2.0mm,width=1.5mm]},
        draw=looporange!85!black, line width=0.95pt]
    ($(lw.south west)+(0.22,-0.04)$) .. controls +(0.12,-0.30) and +(-0.12,-0.30) ..
    ($(lw.south east)+(-0.22,-0.04)$);
  \node[font=\sffamily\bfseries\footnotesize, text=looporange!90!black]
    at ($(lw.east)+(0.42,-0.18)$) {$\times N$};
  \draw[flow] ($(lw.west)+(-0.65,0)$) -- (lw.west);
  \draw[flow] (lw.east) -- ++(0.65,0);

  \node[math, anchor=west] at (0.55, 4.40)
    {$h_{n+1} = h_n + {\color{rescale}\varepsilon}\, r_n$};
  \node[rscale, anchor=west] at (0.55, 3.85)
    {${\color{rescale}\varepsilon}$: residual rescale,~~$r_n = W\,\phi(h_n)$};

  \begin{scope}[shift={(1.00, 2.65)}]
    \coordinate (c0) at (0.00, 0.00);
    \coordinate (c1) at (0.50, 0.06);
    \coordinate (c2) at (0.98, 0.14);
    \coordinate (c3) at (1.48, 0.10);
    \coordinate (c4) at (1.98, 0.18);
    \coordinate (c5) at (2.46, 0.12);
    \coordinate (c6) at (2.94, 0.20);
    \draw[stepl] (c0) -- (c1);
    \draw[stepl] (c1) -- (c2);
    \draw[stepl] (c2) -- (c3);
    \draw[stepl] (c3) -- (c4);
    \draw[stepl] (c4) -- (c5);
    \draw[stepl] (c5) -- (c6);
    \draw[sumarr] ($(c0)+(0,-0.42)$) -- ($(c6)+(0,-0.42)$);
    \node[circle, fill=ink!90, inner sep=0pt, minimum size=2.6pt] at (c0) {};
  \end{scope}

  \node[math, anchor=west] at (4.30, 2.85)
    {$\displaystyle\bigl\|\textstyle\sum_n r_n\bigr\| = \Theta(N)$};
  \node[math, anchor=west, text=ink!75] at (4.30, 2.30)
    {$\displaystyle\bigl\|\textstyle\sum_n r_n\bigr\|^2 = \Theta(N^2)$};

  \node[verdict=looporange!85!black, anchor=west] at (0.35, 1.35)
    {Linear norm growth};
  \node[math, anchor=west] at (0.35, 0.90)
    {linear scaling required:\;\;${\color{rescale}\varepsilon}=\lambda/N$};
\end{scope}

\end{tikzpicture}
}
  \caption{\textbf{Weight sharing changes how residuals accumulate.}
  \textbf{(a)}~In a deep network with independent weights, residual updates point in different directions and accumulate like a random walk, with norm $\Theta(\!\sqrt{L})$. Standard scaling $\varepsilon=1/\!\sqrt{L}$ keeps the output bounded.
  \textbf{(b)}~In a looped network, a single block is reused $N$ times. The shared weights make successive updates align, so their sum grows as $\Theta(N)$---requiring the stronger scaling $\varepsilon=1/N$.}
  \label{fig:main-schematic}
\vspace{-0.3cm}
\end{figure*}
 
In practice, increasing $N$ often leads to training instability such as exploding hidden states and high sensitivity to the learning rate~\citep{zhu2025ouro}.
A standard remedy is to scale each residual branch by a factor $\varepsilon$ that shrinks with depth, giving $h \leftarrow h + \varepsilon\,f(h)$.
Prior depth-scaling analyses prescribe $\varepsilon = 1/\!\sqrt{N}$ for deep residual networks~\citep{bordelon2024depthwise,dey2025completep}, but whether this rule transfers to looped architectures---where the same $f$ is reused at every step---has not been established.

We find that $1/\!\sqrt{N}$ scaling is indeed insufficient for looped models.
Consider the residual-stream norm $\|h_N\|$ after $N$ iterations.
For standard (non-shared) deep networks, $\varepsilon = 1/\!\sqrt{L}$ successfully keeps $\|h_L\|$ bounded as depth $L$ grows (Figure~\ref{fig:energy}, top row).
For looped networks, however, $\varepsilon = 1/\!\sqrt{N}$ fails to control $\|h_N\|$, which grows rapidly with $N$; in contrast, $\varepsilon = 1/N$ keeps it bounded (Figure~\ref{fig:energy}, bottom row).
Our theoretical analysis (Section~\ref{sec:theory}) explains this discrepancy: the $1/\!\sqrt{N}$ rule relies on the assumption that each layer has independent weights, but weight sharing makes successive updates correlated, amplifying residual-stream norm growth from $\Theta(\!\sqrt{N})$ to $\Theta(N)$.

\begin{figure*}[t]
  \centering
  \includegraphics[width=\textwidth]{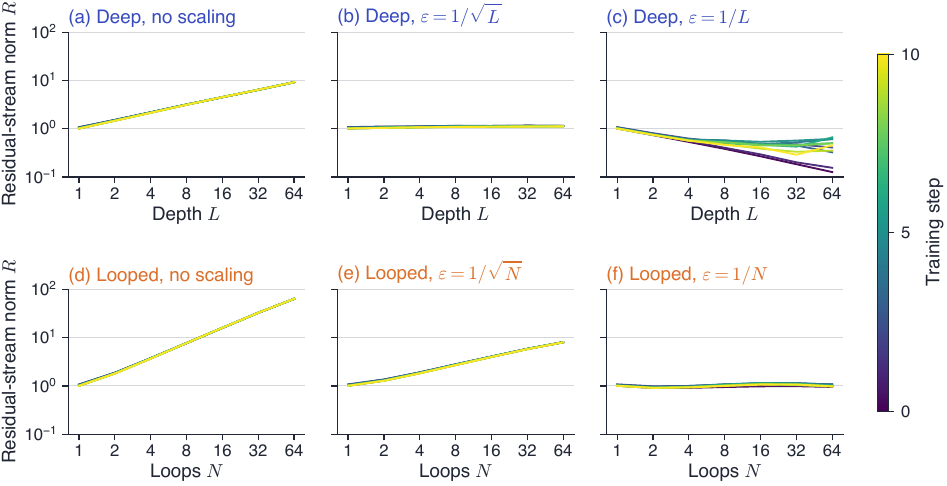}
  \caption{\textbf{Linear scaling stabilizes looped networks; sqrt scaling does not.}
  Normalized residual-stream norm $R=d^{-1/2}\|h\|_2$ (log scale) vs.\ depth $L$ (top) or loop count $N$ (bottom) during the first 10 training steps in the Llama-style pre-norm Transformer diagnostic; lines colored by step.
  $1/\!\sqrt{L}$ scaling stabilizes deep networks (panel~b), but $1/\!\sqrt{N}$ fails for loops (panel~e).
  Linear scaling $\varepsilon{=}1/N$ keeps the residual-stream norm bounded across loop counts (panel~f).}
  \label{fig:energy}
\vspace{-0.3cm}
\end{figure*}
 
Beyond stabilizing the forward pass, the $1/N$ scaling also fixes the learning rate.
The same constructive accumulation that drives $\Theta(N^2)$ squared-norm growth also amplifies weight updates: the output change from one optimizer step scales as $\eta\varepsilon N$, so setting $\varepsilon=1/N$ makes the stable learning rate constant in $N$ (Section~\ref{sec:theory}).
This enables hyperparameter transfer: a learning rate tuned at $N{=}1$ remains near-optimal at larger $N$ without re-tuning.

We further extend the analysis to practical multi-layer blocks, where $L$ distinct layers are each reused $N$ times (Section~\ref{sec:multilayer}).
This introduces a second variance source: across independent layers, updates accumulate as a random walk in $L$, just as in standard deep networks.
The factorized parameterization $\varepsilon = \lambda/(N\sqrt{L})$ handles both sources independently: $1/N$ cancels the within-layer quadratic growth, while $1/\!\sqrt{L}$ controls the across-layer random walk.
The resulting learning-rate law, $\eta \lesssim 1/(\lambda\sqrt{L})$, depends only on the unique depth $L$, not on $N$, so hyperparameter transfer continues to hold for multi-layer blocks.

Experiments on decoder-only Transformers trained on FineWeb-Edu~\citep{penedo2024fineweb} confirm these predictions (Section~\ref{sec:experiments}).
The pairwise correlation structure underlying quadratic accumulation---dense positive cosine similarity between loop-step updates---persists well beyond initialization through full training, confirming that the $\Theta(N^2)$ growth mechanism is not an initialization artifact.
Linear residual scaling yields improved trainability and consistent learning-rate transfer across $N\in\{1,2,4,8\}$: the optimal learning rate remains nearly invariant, unlike under $1/\!\sqrt{N}$ scaling where it shifts with $N$.
The factorized parameterization $\varepsilon=\lambda/(N\sqrt{L})$ further extends this transfer across depths $L\in\{12,24,48\}$, with a single learning rate remaining near-optimal over all tested $(N,L)$ combinations.

Taken together, our work extends the depth-scaling framework~\citep{yang2022tensorprogramsV,dey2025completep} to weight-shared architectures, showing that the standard independence assumptions break down under parameter reuse and deriving the corrected scaling rules.
Beyond the theoretical contribution, the resulting parameterization directly solves two practical problems: it improves trainability at large loop counts and eliminates the need to re-tune hyperparameters when varying $N$.
By making the loop count stable and tunable, these results establish reuse as a practical scaling axis for weight-tied Transformers.
\section{Related Work}
\label{sec:related-work}

\paragraph{Looped and parameter-shared Transformers.}
Using one block repeatedly has been explored as recurrent depth in
Universal Transformers~\citep{dehghani2018universal} and as cross-layer
parameter sharing in ALBERT~\citep{lan2019albert}. More recent looped
models show strong behavior on iterative algorithm learning and
multi-step in-context procedures~\citep{yang2023looped,gatmiry2024multistep},
length generalization~\citep{fan2024looped}, and latent-reasoning style
test-time compute scaling~\citep{saunshi2025latent,zhu2025ouro}. Related
parameter-sharing formulations also appear in looped neural networks~\citep{ng2024loopnn}. 
\citet{chen2026training} study training-free looping of frozen pretrained Transformers, motivating residual updates scaled by the inverse loop count as finer steps of an ODE discretization.
Our work addresses the complementary question of how to parameterize the residual scaling so that training remains stable and hyperparameters transfer.

\paragraph{Stability in deep residual stacks and deep Transformers.}
Large-depth training has a long line of stabilization techniques,
including residual reparameterizations and initialization rules such as
Fixup and ReZero~\citep{zhang2019fixup,bachlechner2020rezero}, and
Transformer-specific stabilizers such as DeepNet/DeepNorm~\citep{wang2022deepnet}. On the theory side, prior analyses of deep
non-shared residual networks characterize when residual scaling keeps
signals controlled in the large-depth limit~\citep{marion2022scalingresnets}. A complementary line studies
generalization for continuous-depth models and their ResNet analogues~\citep{DBLP:conf/nips/ChenRBD18}:
\citet{marion2023generalization} derives a Lipschitz-based bound whose
complexity term depends on differences between successive weight
matrices. These works mostly treat depth as a
stack of distinct layers, whereas in our setting loop steps reuse the
same parameters.

\paragraph{Hyperparameter transfer and parameterization.}
Tensor-program and $\mu$P-style analyses establish transfer principles
across model scales and motivate systematic parameterization choices~\citep{yang2022tensorprogramsV}. Recent depth-transfer analyses in
residual networks study how learning-rate and initialization choices
change with depth under non-shared assumptions~\citep{bordelon2024depthwise,hayou2023commute}. CompleteP and follow-up
work extend this direction for deep Transformers and broader axes of
transfer~\citep{dey2025completep,mlodozeniec2025completed}. Our work is complementary, targeting the \emph{loop axis} and showing that shared weights create cross-step correlations that change the stability threshold and transfer regime.
\section{Loop Scaling for Shared Layers}
\label{sec:theory}

We analyze initialization-time scaling of a single shared MLP to isolate the effect of weight sharing; Section~\ref{sec:multilayer} extends to multi-layer blocks.

\subsection{Setup}

Consider the following simplified residual model, which abstracts one residual branch of a Transformer block:
\begin{equation}
  \begin{aligned}
  h_{n+1}&=h_n+\varepsilon\, W\phi(h_n),\\
  \varepsilon&=N^{-\alpha},\qquad n=0,\dots,N-1.
  \end{aligned}
  \label{eq:loop}
\end{equation}
Here $N$ is the loop count (the number of times the shared layer is applied), and $\alpha>0$ is the scaling exponent that controls how aggressively the residual branch is down-scaled with $N$.
The hidden state $h_n\in\R^d$ is initialized from a given input $h_0$ with $\norm{h_0}_2^2/d=\Theta(1)$; the model output is $h_N$.
The shared weight matrix $W\in\R^{d\times d}$ is drawn i.i.d.\ $W_{ij}\sim\mathcal N(0,1/d)$, and $\phi$ is the ReLU activation.
Our goal is to determine the minimum $\alpha$ that keeps $h_N$ bounded as $N$ grows, and the induced scaling law for the learning rate.

We write $u_n\triangleq\phi(h_n)$ for the post-activation vector, $r_n\triangleq Wu_n$ for the per-step residual, and $R_n\triangleq d^{-1/2}\norm{h_n}_2$ for the normalized residual-stream norm.

\begin{figure*}[t]
  \centering
  \includegraphics[width=0.70\textwidth]{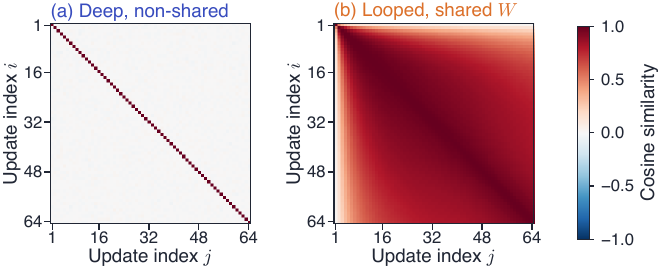}
  \caption{\textbf{Weight sharing induces persistent cross-step correlations.}
  Pairwise cosine similarity between block-level increments $\delta_i = h_i - h_{i-1}$, comparing a non-shared deep stack (panel~a; $64$ independent copies of a 12-layer block, effective depth $12{\times}64$) with a looped network (panel~b; $L{=}12$, $N{=}64$, $d{=}768$).
  Both configurations have the same effective depth; the only difference is whether block weights are shared.
  Both measured without residual scaling, after 10 training steps.
  In the non-shared case, off-diagonal correlations are negligible (range $[-0.037,0.034]$).
  In the looped case, the shared weights produce dense positive alignment (range $[0.027,0.995]$), consistent with $\Theta(N^2)$ accumulation (Theorem~\ref{thm:alpha1}).}
  \label{fig:corr}
\vspace{-0.3cm}
\end{figure*}
 
\subsection{Quadratic variance accumulation}

Unrolling \eqref{eq:loop} gives $h_N=h_0+\varepsilon\sum_{n=0}^{N-1}r_n$.
To see how $R_N$ scales with $N$, we expand $R_N^2$:
\begin{equation*}
  \begin{aligned}
  R_N^2
  &=
  R_0^2
  +\underbrace{\frac{2\varepsilon}{d}
    \sum_{n=0}^{N-1}\inner{h_0}{r_n}}_{B_N}
  +\underbrace{\frac{\varepsilon^2}{d}
    \sum_{n=0}^{N-1}\sum_{m=0}^{N-1}
    \inner{r_n}{r_m}}_{C_N}.
  \end{aligned}
\end{equation*}
The cross term $B_N$ sums $N$ inner products between the fixed input $h_0$ and the residuals $r_n$, so $B_N=O(\varepsilon N)$.
The quadratic term $C_N$, however, sums $N^2$ pairwise interactions and equals $\frac{\varepsilon^2}{d}\norm{\sum_n r_n}_2^2$.
Whether $C_N$ grows as $\Theta(\varepsilon^2 N^2)$ or merely $\Theta(\varepsilon^2 N)$ depends on whether the residual updates reinforce or cancel.

Because $W$ is shared across all iterations, we can factor it out:
\[
  \left\|\sum_{n=0}^{N-1}r_n\right\|_2^2
  =
  \left\|W\sum_{n=0}^{N-1}u_n\right\|_2^2.
\]
This reduces the question to: how does $\norm{\sum_n u_n}$ scale with $N$?
Since all $u_n=\phi(h_n)$ are produced by the same recurrence with shared $W$, successive activations are correlated.
When this correlation is strong enough that the activations accumulate constructively, the norm of their sum grows as $\Theta(N)$ rather than $\Theta(\!\sqrt{N})$, giving $\norm{\sum_n r_n}_2^2=\Theta(N^2)$ and thus $C_N=\Theta(\varepsilon^2 N^2)$.

In a standard (non-shared) deep residual network, this does not happen: each step uses an independent weight matrix $W_n$, so the cross terms $\inner{r_n}{r_m}$ have zero mean for $n\ne m$ and the sum grows only as $\Theta(\!\sqrt{N})$~\citep{bordelon2024depthwise,dey2025completep}.
Weight sharing breaks this independence and changes the squared-norm accumulation from $\Theta(N)$ (random walk) to $\Theta(N^2)$ (linear norm growth from constructive accumulation).
Figure~\ref{fig:corr} confirms this empirically: pairwise cosine similarities between loop-step increments $\delta_n=h_n-h_{n-1}$ are near-zero off-diagonal in non-shared stacks but densely positive in looped networks.

The following theorem formalizes this.
The proof has two parts: ReLU non-negativity places $\sum_n u_n$ in the non-negative orthant $\cpos=\{x\in\R^d: x_i\ge 0\}$, ensuring constructive accumulation; a Gaussian matrix argument~\citep{gordon1988milman} then shows $W$ preserves this norm.

\begin{theorem}[Looped residual-stream norm scaling]
  \label{thm:alpha1}
  Assume:
  \begin{enumerate}
    \item \textbf{(Nondegenerate activation mass)}
      $\frac{1}{N}\sum_{n=0}^{N-1}\frac{1}{d}\ones^\top u_n\ge m_->0$, where $m_-$ is independent of $N$.
    \item \textbf{(Bounded activation scale)}
      $\frac{1}{N}\sum_{n=0}^{N-1}\frac{1}{d}\norm{u_n}_2^2\le q_+<\infty$, where $q_+$ is independent of $N$.
    \item \textbf{(Positive-cone gain)}
      $c_W\norm{x}_2\le\norm{Wx}_2\le C_W\norm{x}_2$ for every $x\in\cpos$, with $c_W,C_W>0$.
  \end{enumerate}
  Then
  \[
    \frac{1}{d}\left\|\sum_{n=0}^{N-1}r_n\right\|_2^2
    =
    \Theta(N^2),
  \]
  and bounded residual-stream norm requires $\varepsilon N=O(1)$, i.e.\ $\alpha\ge 1$.
  For Gaussian $W$, the positive-cone gain holds with high probability (Appendix~\ref{app:theory}).
\end{theorem}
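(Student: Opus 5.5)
The argument will use the tied-weight factorization together with the positive-cone lemmas stated in the appendix. Set $U_N=\sum_{n=0}^{N-1}u_n$. Because the same $W$ is used at every step, $\sum_n r_n=\sum_n Wu_n=WU_N$. This is an exact identity, and it reduces the whole problem to controlling $\norm{U_N}_2$. Each $u_n=\phi(h_n)$ is a ReLU output and so lies in $\cpos$; since $\cpos$ is a convex cone, $U_N\in\cpos$ as well.

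\textbf{Step 1 (mass of $U_N$).} Assumptions 1 and 2 are exactly the hypotheses of Lemma~\ref{lem:positive-cone-mass}, which gives $m_-N\le d^{-1/2}\norm{U_N}_2\le\sqrt{q_+}N$.

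\emph{Lower bound.} This is the step that uses positivity. For $x\in\cpos$ we have $\ones^\top x=\norm{x}_1$, and Cauchy--Schwarz gives $\norm{x}_2\ge d^{-1/2}\ones^\top x$. Applying this to $U_N$ yields $d^{-1/2}\norm{U_N}_2\ge \sum_n m(u_n)\ge m_-N$. No cancellation can occur because nonnegative vectors add coordinatewise without sign interference.

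\emph{Upper bound.} Apply the triangle inequality, then Cauchy--Schwarz, then Assumption 2.

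\textbf{Step 2 (transfer through $W$).} Assumption 3 applied to $x=U_N\in\cpos$ gives $c_W\norm{U_N}_2\le\norm{WU_N}_2\le C_W\norm{U_N}_2$. Squaring, dividing by $d$, and combining with Step 1 yields
\[
c_W^2m_-^2N^2\le \frac1d\Bigl\|\sum_n r_n\Bigr\|_2^2\le C_W^2q_+N^2,
\]
which is the claimed $\Theta(N^2)$.

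\textbf{Step 3 (consequence for $\alpha$).} Write $h_N=h_0+\varepsilon WU_N$ and apply the triangle and reverse-triangle inequalities:
\[
(\varepsilon c_Wm_-N-R_0)_+\le R_N\le R_0+\varepsilon C_W\sqrt{q_+}N.
\]
If $R_0=O(1)$ and $\varepsilon N=O(1)$, the upper bound shows $R_N$ stays bounded. Conversely, if $\varepsilon N\to\infty$, the lower bound shows $R_N\to\infty$. With $\varepsilon=N^{-\alpha}$ we have $\varepsilon N=N^{1-\alpha}$, so boundedness forces $\alpha\ge1$.

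\textbf{Gaussian case.} For Gaussian $W$, Assumption 3 follows from Lemma~\ref{lem:cone-gain}. With $\sigma_W=1$ it gives $c_W=a_\gamma$ and $C_W=3$ with high probability. The lower constant comes from Gordon's escape-through-a-mesh theorem, using the bound $\sqrt{d/2}$ on the Gaussian width of $\cpos\cap S^{d-1}$.

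\textbf{Main obstacle.} I expect the hardest part to be conceptual rather than computational. One cannot use independence here: $W$ and the $u_n$ are strongly dependent, since the $u_n$ are generated by $W$. A union bound over the individual $u_n$ is also unavailable. What rescues the argument is that the cone gain is \emph{uniform} over all of $\cpos$, so it applies to the data-dependent vector $U_N$ regardless of how it was produced. Given that uniformity, the algebra in Steps 1--3 is routine.
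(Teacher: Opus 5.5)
Your proposal is correct and follows the paper's proof essentially step for step: the tied-weight factorization $\sum_n r_n = WU_N$, Lemma~\ref{lem:positive-cone-mass} for $d^{-1/2}\norm{U_N}_2=\Theta(N)$, the uniform cone gain applied to the data-dependent $U_N\in\cpos$, and the same triangle/reverse-triangle sandwich for $R_N$. One small framing point: the bound $\norm{x}_2\ge d^{-1/2}\ones^\top x$ is plain Cauchy--Schwarz and holds for every $x\in\R^d$, so positivity is really needed to place $U_N$ in $\cpos$ for Assumption~3 (and to make Assumption~1 natural), not for the lower bound itself.
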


\noindent\textbf{Proof.} See Appendix~\ref{app:theory}.

In words, the residual branch must be scaled as $1/N$, not $1/\!\sqrt{N}$; Figure~\ref{fig:energy} confirms this.

\smallskip
\noindent\textbf{Remark (beyond ReLU).}\enspace
ReLU non-negativity is a \emph{sufficient} condition used in the proof; Figures~\ref{fig:corr}(b) and~\ref{fig:update-cosine} suggest that constructive accumulation also occurs in SwiGLU-based models, where the positive-cone argument does not directly apply.

\subsection{Learning-rate scaling}

Theorem~\ref{thm:alpha1} controls the forward pass: setting $\varepsilon=1/N$ keeps the residual-stream norm bounded at initialization.
But stable initialization alone does not guarantee stable training: the learning rate $\eta$ must also be chosen so that each optimizer step produces an $O(1)$ change in the output $h_N$, independent of $N$.
This is the maximal-update principle of~\citet{yang2022tensorprogramsV}; following \citet{dey2025completep}, we apply it to the loop axis.
Let $\Delta W$ denote the one-step weight update and $\Delta h_N$ the resulting change in output.

\begin{theorem}[Loop-wise learning-rate scaling]
  \label{thm:lr}
  Assume:
  \begin{enumerate}
    \item \textbf{(Update scale)}
      $\Delta W_{ij}=\frac{\eta}{\sqrt d}S_{ij}$ with $\norm{\Delta W}=O(\eta)$, modeling Adam-style sign updates.
    \item \textbf{(Stable activations)}
      $\norm{u_n}_2^2/d=\Theta(1)$, ensured by Theorem~\ref{thm:alpha1}.
    \item \textbf{(Stable forward scaling)}
      $\varepsilon N=O(1)$.
  \end{enumerate}
  Then
  \[
    \frac{1}{\sqrt d}\norm{\Delta h_N}_2
    =
    O(\eta\varepsilon N),
  \]
  so $\eta\,\varepsilon\,N=O(1)$ is sufficient for an $O(1)$ width-normalized one-step output perturbation.
\end{theorem}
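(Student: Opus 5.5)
We compare the post-update trajectory $h_n^+$, driven by $W^+=W+\Delta W$ from the same input $h_0$, with the original trajectory $h_n$, and follow the difference $\Delta h_n=h_n^+-h_n$ through the loop. Subtracting the two tied recursions gives
\[
  \Delta h_{n+1}=\Delta h_n+\varepsilon\,\Delta W\phi(h_n^+)+\varepsilon W\bigl(\phi(h_n^+)-\phi(h_n)\bigr).
\]
The right-hand side has two driving terms. The first is the direct effect of the weight change. The second propagates the existing discrepancy through the shared map. We bound each one in width-normalized $\ell_2$ norm and close with a discrete Gr\"onwall argument.

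\textbf{Step 1: per-step bounds.} Write $a_n=\norm{\Delta h_n}_2$. For the forcing term, the update-scale assumption gives $\norm{\Delta W}_{\mathrm{op}}\le C_\Delta\eta$. This yields $\norm{\varepsilon\Delta W\phi(h_n^+)}_2\le\varepsilon C_\Delta\eta\,\norm{\phi(h_n^+)}_2\le\varepsilon C_\Delta\eta Q_+\sqrt d$, where $Q_+$ bounds $d^{-1/2}\norm{\phi(h_n^+)}_2$ uniformly in $n<N$. For the propagation term, ReLU is $1$-Lipschitz coordinatewise, and $\norm{W}_{\mathrm{op}}\le C_W$ holds with high probability for Gaussian $W$ (the operator-norm estimate $\norm{G}_{\mathrm{op}}\le 3\sqrt d$ used in Lemma~\ref{lem:cone-gain}). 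Hence $\norm{\varepsilon W(\phi(h_n^+)-\phi(h_n))}_2\le\varepsilon C_W a_n$. Combining the two bounds,
\[
  a_{n+1}\le(1+\varepsilon C_W)a_n+\varepsilon C_\Delta\eta Q_+\sqrt d,\qquad a_0=0.
\]

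\textbf{Step 2: iterate.} Unrolling the recursion gives $a_N\le\varepsilon C_\Delta\eta Q_+\sqrt d\sum_{k<N}(1+\varepsilon C_W)^k\le\varepsilon N\,C_\Delta\eta Q_+\sqrt d\,(1+\varepsilon C_W)^N$. Stable forward scaling means $\varepsilon N\le M$, so $(1+\varepsilon C_W)^N\le e^{\varepsilon C_W N}\le e^{C_WM}$, which is independent of $N$. Dividing by $\sqrt d$ then gives $d^{-1/2}\norm{\Delta h_N}_2\le C_\Delta Q_+e^{C_WM}\,\eta\varepsilon N=O(\eta\varepsilon N)$.

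\textbf{Main obstacle.} The delicate point is the uniform bound $Q_+$ on the \emph{post-update} activations $\phi(h_n^+)$. The stable-activations assumption is stated for the pre-update trajectory. To transfer it, I would run Theorem~\ref{thm:alpha1}'s upper-bound argument for $W^+$. The bound $\norm{W^+}_{\mathrm{op}}\le C_W+C_\Delta\eta$ together with $\varepsilon N=O(1)$ gives $d^{-1/2}\norm{h_n^+}_2\le R_0e^{\varepsilon N(C_W+C_\Delta\eta)}$ by the same Gr\"onwall bound. Since $\norm{\phi(x)}_2\le\norm{x}_2$, this yields $Q_+=O(1)$ for $\eta=O(1)$. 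If $\eta$ is allowed to be large, I would instead apply a bootstrap: write $\phi(h_n^+)=\phi(h_n)+(\phi(h_n^+)-\phi(h_n))$ and absorb the extra $\varepsilon C_\Delta\eta\,a_n$ term into the recursion coefficient. This keeps the conclusion $O(\eta\varepsilon N)$ whenever $\eta\varepsilon N=O(1)$.
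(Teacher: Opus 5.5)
Your proposal is correct and follows the paper's proof essentially line for line: the same difference recursion for $\Delta h_n$, the same per-step bound $a_{n+1}\le(1+\varepsilon C_W)a_n+\varepsilon C_\Delta\eta Q_+\sqrt d$ obtained from the $1$-Lipschitz ReLU and the operator-norm bounds, and the same discrete Gr\"onwall iteration closed with $\varepsilon N=O(1)$. The one difference is that the paper takes the post-update activation bound $Q_+$ (and $\norm{W}_{\mathrm{op}}\le C_W$) as extra hypotheses, whereas you justify them. Your bootstrap is a valid way to rely only on the theorem's stated pre-update assumption~2 when $\eta\varepsilon N=O(1)$: it replaces $Q_+$ by the pre-update activation scale and pays only an extra $\varepsilon C_\Delta\eta$ in the growth factor.
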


\noindent\textbf{Proof.} See Appendix~\ref{app:lr}.

\smallskip

At the critical scaling $\alpha=1$, the sharpness statement gives $\eta=\Theta(1)$: the optimal learning rate is constant in $N$, enabling hyperparameter transfer from small to large loop counts (Section~\ref{sec:experiments}).
Figure~\ref{fig:lr-step-update} corroborates the $\eta\varepsilon N$ upper-bound scaling: the per-step output update grows as $N$ and $\sqrt{N}$ under the unscaled and square-root scaling rules (where the leading $\eta\varepsilon N$ term is empirically sharp as a scaling law, though outside the $\varepsilon N{=}O(1)$ premise of Theorem~\ref{thm:lr}), while linear scaling keeps it bounded within a small range.

\begin{figure}[t]
  \centering
  \includegraphics[width=0.9\linewidth]{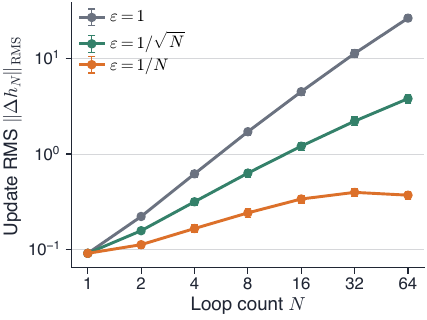}
  \caption{\textbf{Constant-LR per-step output updates track the $\eta\,\varepsilon\,N$ upper-bound scaling.}
  RMS change in the final pre-RMSNorm residual hidden state after one optimizer step with fixed $\eta=\Theta(1)$ vs.\ loop count $N$.
  }
  \label{fig:lr-step-update}
\vspace{-0.3cm}
\end{figure}
\section{Extension to Multi-Layer Blocks}
\label{sec:multilayer}

\begin{figure*}[t]
  \centering
  \includegraphics[width=0.92\textwidth]{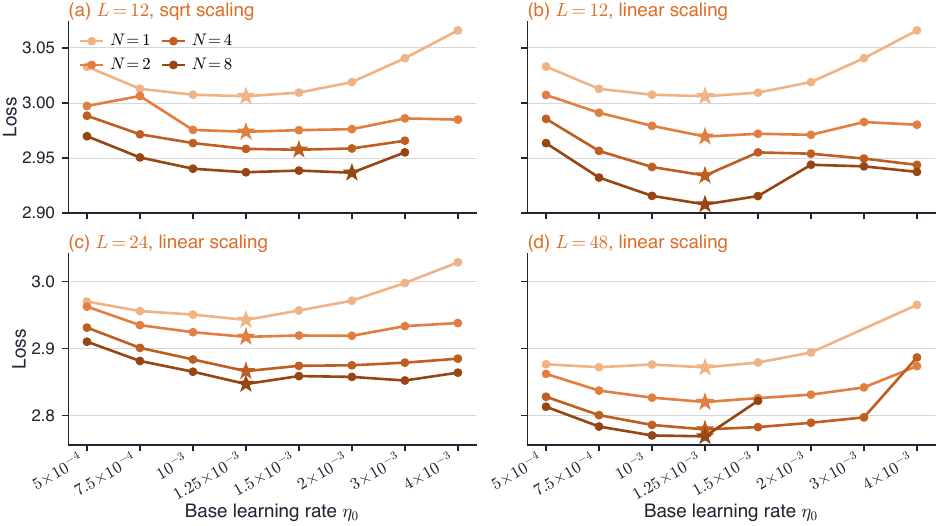}
  \caption{\textbf{Learning-rate transfer across scaling rules and depths.}
  Curves sweep learning rate for loop counts $N\in\{1,2,4,8\}$; stars mark per-$N$ optima.
  The $x$-axis is the base learning rate $\eta_0$; the actual repeated-block learning rate is $\eta_0\,m_L^{-1/2}=\eta_0(L/12)^{-1/2}$, so for the $L{=}12$ panels (a,b) plotted and actual coincide, while panels (c,d) at $L{=}24,48$ apply the depth correction implicitly.
  At $L{=}12$, sqrt scaling shifts the optimum with $N$ (panel~a), while linear scaling aligns optima and improves large-$N$ loss (panel~b).
  With $\varepsilon=1/(N\sqrt{L})$, a single base learning rate remains near-optimal across both $L$ and $N$.
  Diverged runs (final validation loss $>4$) are omitted.}
  \label{fig:lr-transfer}
\vspace{-0.3cm}
\end{figure*}
\begin{figure*}[t]
  \centering
  \includegraphics[width=\textwidth]{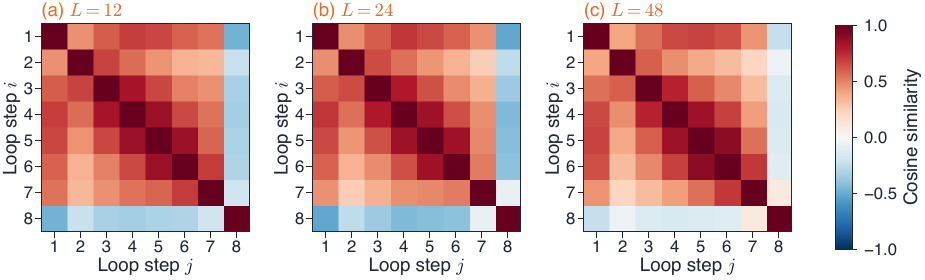}
  \caption{\textbf{Cosine similarity between loop-step increments $\delta_n=h_n-h_{n-1}$ after full training} (20{,}000 steps, $N{=}8$).
  Most early and middle adjacent loop steps remain positively correlated, and the overall pattern is qualitatively consistent across depths; pairs involving the final loop step (step~8) are weaker and can be negative.}
  \label{fig:update-cosine}
\vspace{-0.3cm}
\end{figure*}
\begin{figure*}[t]
  \centering
  \includegraphics[width=\textwidth]{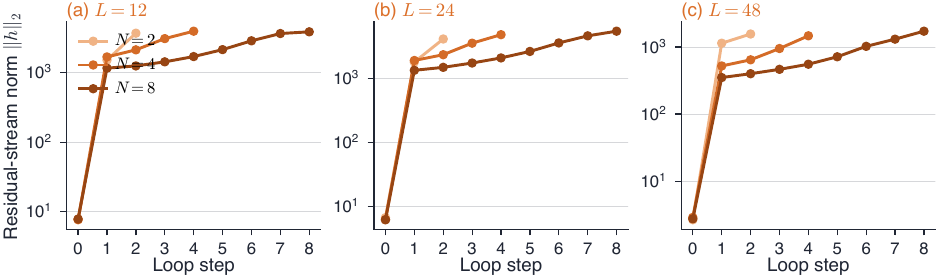}
  \caption{\textbf{Residual-stream norm trajectories across loop steps} under linear residual scaling.
  Each line traces the residual-stream L2 norm from loop step~0 (post-embedding) through the final step, for $N\in\{2,4,8\}$.
  Although the per-step increments differ across $N$, the final norms converge to approximately the same value within each depth group, confirming that the $1/N$ rescaling keeps the residual stream bounded regardless of $N$.}
  \label{fig:hidden-norm}
\vspace{-0.3cm}
\end{figure*}
 
Section~\ref{sec:theory} analyzed a single shared layer; practical looped architectures repeat a block of $L$ distinct layers $N$ times.
We now extend the scaling analysis to this setting.

\subsection{Setup}

Each layer $\ell$ has its own weight matrix $W_\ell$, but the same $W_\ell$ is reused across all $N$ iterations.
We index hidden states as $h_{n,\ell}$, where $n\in\{0,\dots,N-1\}$ is the loop iteration and $\ell\in\{0,\dots,L-1\}$ is the layer index within the block.
The recursion is:
\begin{equation}
  \begin{aligned}
  h_{n,\ell+1}
  &= h_{n,\ell}+\varepsilon\, W_\ell\,\phi(h_{n,\ell}),\\
  \ell&=0,\dots,L-1,\qquad n=0,\dots,N-1,
  \end{aligned}
  \label{eq:Llayer-recursion}
\end{equation}
where the output of the last layer feeds into the next iteration: $h_{n+1,0} = h_{n,L}$, starting from $h_{0,0} = h_0$.
The model output is $h_{N-1,L}$, equivalently $h_{N,0}$ after the final loop pass; our goal is to determine $\varepsilon$ as a function of both $N$ and $L$.

\subsection{Two-source variance accumulation}

Telescoping the residual additions gives:
\[
  h_{\text{out}}
  =
  h_0+\varepsilon\sum_{\ell=0}^{L-1}G_\ell,
  \qquad
  G_\ell \triangleq W_\ell\sum_{n=0}^{N-1}\phi(h_{n,\ell}).
\]
Unlike the single-layer case, the output variance now receives contributions from two distinct sources.
Within each layer $\ell$, weight sharing produces the same quadratic accumulation as in Section~\ref{sec:theory}: successive activations $\phi(h_{n,\ell})$ are correlated across loop steps, giving $\|G_\ell\|^2=\Theta(N^2 d)$.
Across the $L$ layers, the independent weight matrices contribute a standard depth-wise random walk, the same mechanism studied for non-shared deep networks~\citep{dey2025completep}.
The only additional issue is that the $L$ layers are not literally independent: they communicate through the same residual stream.
The factorized scaling is therefore valid when this communication remains local in strength.
Informally, after the within-layer loop accumulation has been normalized by $1/N$, changing one unique layer should perturb the other normalized branch outputs only through the total residual size $\varepsilon N$, rather than dragging all layers in a single coherent direction.
Under this weak cross-layer-coupling picture, the remaining accumulation over $\ell$ is the ordinary depth-wise random walk, giving the extra $1/\!\sqrt{L}$ factor.
The post-training hidden-norm and cosine diagnostics in Figures~\ref{fig:hidden-norm} and~\ref{fig:update-cosine} are consistent with this picture: the residual stream remains controlled across loop counts, while correlations persist without producing a global layer-wise collapse.
Writing $\varepsilon=\lambda/(N\!\sqrt{L})$ with a tunable $O(1)$ constant $\lambda$, the following theorem formalizes this intuition.

\begin{theorem}[Multi-layer block scaling; informal]
\label{thm:Llayer}
Assume that\textup{:}
\begin{enumerate}
\item[\textup{(i)}] each layer's branch output has nondegenerate normalized norm\textup{;}
\item[\textup{(ii)}] the conditional mean of each layer's branch output, given the other layers' weights, has normalized norm at most $O(\varepsilon N)$\textup{;}
\item[\textup{(iii)}] replacing any single weight matrix changes other branches' outputs by at most $O(\varepsilon N)$\textup{.}
\end{enumerate}
Then $\varepsilon=\lambda/(N\!\sqrt{L})$ yields a residual-stream norm bounded by a function of $\lambda$ alone, independent of both $N$ and $L$.
For sufficiently small $\lambda$, the bound is tight\textup{:} the unscaled total branch variance is $\Theta(LN^2)$.
The formal statement and proof are in Appendix~\ref{app:Llayer}, Theorem~\ref{thm:Llayer-formal}.
\end{theorem}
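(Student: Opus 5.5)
We work with the normalized branch outputs $Y_\ell=W_\ell\bar U_\ell$, $G_\ell=NY_\ell$, $\beta=\varepsilon N$, and the formal assumptions \eqref{eq:app-branch-energy}--\eqref{eq:app-replacement-assumption} of Theorem~\ref{thm:Llayer-formal}. The plan is to prove the bound
\[
  \E\Bigl[d^{-1}\bigl\|\textstyle\sum_\ell Y_\ell\bigr\|_2^2\Bigr]\le C(L+\beta^2L^2).
\]
Multiplying by $N^2$ gives the bound for $\sum_\ell G_\ell$. Since $h_{\mathrm{out}}-h_0=\varepsilon\sum_\ell G_\ell$ and $\varepsilon N=\beta=\lambda/\sqrt L$, this gives $\varepsilon^2 N^2(L+\beta^2L^2)=\lambda^2+\lambda^4$. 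With $R_0=O(1)$ and the triangle inequality, the residual-stream norm is then bounded by a function of $\lambda$ alone.

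\textbf{Splitting each branch.} Write $Y_\ell=B_\ell+M_\ell$, with $B_\ell=\E[Y_\ell\mid\mathcal F_{-\ell}]$. For the drift sum, Minkowski's inequality in $L^2$ together with \eqref{eq:app-drift-assumption} gives $\E\,d^{-1}\|\sum_\ell B_\ell\|^2\le C_0\beta^2L^2$. For the centered sum, expand $\|\sum_\ell M_\ell\|^2$ into diagonal and off-diagonal terms.

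\textbf{Diagonal terms.} These are $O(1)$ each, by \eqref{eq:app-branch-energy} and \eqref{eq:app-drift-assumption}. Their total contribution is $O(L)$.

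\textbf{Off-diagonal terms.} These are the crux: we need $|\E\,d^{-1}\langle M_\ell,M_s\rangle|\le C\beta^2$ for $\ell\ne s$.
\begin{itemize}
  \item Use the Hoeffding--ANOVA decomposition of $M_\ell$ with respect to the independent matrices $W_0,\dots,W_{L-1}$. Because $\E[M_\ell\mid\mathcal F_{-\ell}]=0$, every component of $M_\ell$ involves the index $\ell$.
  \item Orthogonality of components with distinct index sets then shows that $\E\langle M_\ell,M_s\rangle$ only sees components whose index set contains both $\ell$ and $s$.
  \item Cauchy--Schwarz bounds the cross term by $V_{\ell,s}^{1/2}V_{s,\ell}^{1/2}$, where $V_{\ell,s}$ is the ANOVA mass of $M_\ell$ on index sets containing $s$.
  \item By the Efron--Stein identity, $V_{\ell,s}$ (for $Y_\ell$) equals $\tfrac12\E\|Y_\ell-Y_\ell^{(s)}\|^2/d$. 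Passing from $Y_\ell$ to $M_\ell$ subtracts the conditional expectation $B_\ell$, which is an $L^2$ contraction, so the mass can only decrease up to constants. Assumption \eqref{eq:app-replacement-assumption} then yields $V_{\ell,s}\le C\beta^2$.
\end{itemize}
Summing over the $L^2$ ordered pairs gives $C\beta^2L^2$. Combining with the diagonal terms and using $\|a+b\|^2\le2\|a\|^2+2\|b\|^2$ finishes the upper bound.

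\textbf{Main obstacle.} The delicate step is making rigorous the claim that the ANOVA mass of $M_\ell$ on sets containing $s$ is controlled by replacing $W_s$. The ANOVA projections of $M_\ell=Y_\ell-B_\ell$ must be related to those of $Y_\ell$. I would first try to show directly that the $A$-components of $B_\ell$ vanish for every $A\ni\ell$ and agree with those of $Y_\ell$ for every $A\not\ni\ell$. Then $(M_\ell)_A=(Y_\ell)_A$ for all $A\ni\ell$, which gives the cleanest route.

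\textbf{Lower bound (small $\lambda$).} Use $\|M_\ell\|^2\ge\frac12\|Y_\ell\|^2-\|B_\ell\|^2$ to get diagonal mass at least $\frac{a_-}{2}L-C\beta^2L$. Subtract the cross-term bound $C\beta^2L^2$ to obtain $\E\|S_M\|^2/d\ge cL-C\beta^2L^2$. Then $\|S_M+S_B\|^2\ge\frac12\|S_M\|^2-\|S_B\|^2$ gives the same form for $S_Y=\sum_\ell Y_\ell$. With $\beta^2L^2=\lambda^2L$, this is $\Theta(L)$ once $\lambda$ is below a threshold depending on $c$ and $C$. Hence $\E\,d^{-1}\|\sum_\ell G_\ell\|^2=\Theta(LN^2)$.
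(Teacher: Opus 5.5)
Your proposal is correct and follows the paper's proof essentially step for step. Both arguments use the split $Y_\ell=B_\ell+M_\ell$, Minkowski for the drift sum, the Hoeffding--ANOVA/Cauchy--Schwarz/Efron--Stein bound of $O(\beta^2)$ on each centered cross term, and the reverse inequalities $\|a+b\|^2\ge\frac12\|a\|^2-\|b\|^2$ for the small-$\lambda$ lower bound. The step you flag as the main obstacle closes exactly as you propose: $B_\ell=\E[Y_\ell\mid\mathcal F_{-\ell}]=\sum_{A\not\ni\ell}(Y_\ell)_A$, so $(M_\ell)_A=(Y_\ell)_A$ for every $A\ni\ell$, giving $V_{\ell,s}\le\frac12\E[d^{-1}\|Y_\ell-Y_\ell^{(s)}\|_2^2]\le C_1\beta^2$ directly, which is slightly cleaner than the paper's ``up to a universal constant'' contraction remark.
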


\subsection{Learning-rate scaling}

Under the factorized scaling, an update $\Delta W_\ell$ to layer $\ell$ is applied at all $N$ loop steps, contributing $O(\varepsilon\eta N)=O(\eta\lambda/\!\sqrt{L})$ to the output change.
In the worst case, the $L$ layer updates add coherently with norm $O(L)$, giving total output change $O(\eta\lambda\sqrt{L})$.
Requiring $O(1)$ output change yields:
\(
  \eta \lesssim \frac{1}{\lambda\sqrt{L}}.
\)
Proposition~\ref{prop:Llayer-lr} (Appendix~\ref{app:Llayer-lr}) formalizes this as a linearized maximal-update bound, sharp under a coherence condition on the layer updates.
Crucially, $N$ does not appear: once $1/N$ scaling is applied to the residual branch, the learning rate depends only on the unique depth $L$, enabling hyperparameter transfer.
Table~\ref{tab:recipe} summarizes the resulting scaling recipe; the full per-component parameterization is in Appendix~\ref{app:completep-loop-table}.

\begin{table}[t]
\centering
\small
\begin{tabular}{@{}ll@{}}
\toprule
Component & Scaling rule \\
\midrule
Residual branch & $\times\;\lambda\,N^{-1}\,m_L^{-1/2}$ \\
Looped stack LR & $\eta_0\,m_L^{-1/2}$ \\
Embed / Unembed LR & $\eta_0$ \\
\bottomrule
\end{tabular}
\caption{
Executive scaling recipe for a looped Transformer with $L$ unique layers repeated $N$ times.
$m_L=L/L_{\mathrm{ref}}$ is the depth multiplier relative to a reference model. $\lambda$ is a tunable $O(1)$ constant.
}
\label{tab:recipe}
\vspace{-0.3cm}
\end{table}
\section{Experiments}
\label{sec:experiments}

We test the scaling predictions on looped Transformers trained for language modeling, progressing from a controlled initialization diagnostic (Section~\ref{sec:init-scaling}) to full-scale training with learning-rate transfer (Sections~\ref{sec:lm-transfer}--\ref{sec:depth-scaling}) and validation that the theoretical assumptions hold beyond initialization (Section~\ref{sec:post-training}).

\paragraph{Model structure.}
All experiments use decoder-only Transformers with a Llama-style pre-norm architecture~\citep{DBLP:conf/nips/VaswaniSPUJGKP17,DBLP:journals/corr/abs-2302-13971}.
The token embedding and unembedding head sit outside the loop; the looped stack contains $L$ unique Transformer blocks, each with two residual branches (attention and MLP), and the entire $L$-block sequence is repeated for $N$ passes.
The residual scaling factor $\varepsilon$ is applied to each branch; in the single-layer theory (Section~\ref{sec:theory}), one ``layer'' corresponds to one branch, so the constant factor of two per block is absorbed into $\lambda$.

\subsection{Residual scaling at initialization}
\label{sec:init-scaling}

\paragraph{Protocol.}
We run a controlled experiment that measures the residual-stream norm during the first $10$ optimizer steps.
We vary the number of loop iterations $N\in\{1,2,4,8,16,32,64\}$ and compare three residual scalings: \emph{none} ($\varepsilon{=}1$), \emph{sqrt} ($\varepsilon{=}1/\sqrt{N}$), and \emph{linear} ($\varepsilon{=}1/N$).
Training uses AdamW for 10 steps, batch size 1, sequence length 128, and fixed random token inputs.
Figure~\ref{fig:energy} plots the residual-stream norm $R=d^{-1/2}\norm{h}_2$ at the end of the last loop iteration, averaged over 10 seeds.
To directly observe the correlation structure, we also measure pairwise cosine similarities between block-level increments $\delta_n = h_n - h_{n-1}$ for a non-shared deep stack ($64$ independent copies of a 12-layer block, effective depth $12{\times}64$) and a looped network ($L{=}12$, $N{=}64$), both without residual scaling (Figure~\ref{fig:corr}).
Using the same setup, we measure the per-step output update---the root-mean-square (RMS) change in $h_N$ after one optimizer step---to directly test the $\eta\varepsilon N$ prediction of Theorem~\ref{thm:lr} (Figure~\ref{fig:lr-step-update}).

\paragraph{Findings.}
(i)~\emph{linear scaling controls forward-pass growth}: $1/\!\sqrt{N}$ scaling is insufficient---the residual-stream norm grows rapidly with $N$ and can explode during early optimization (Figure~\ref{fig:energy}, bottom row), while linear scaling keeps it bounded and approximately invariant across $N$ (Figure~\ref{fig:energy}, panel~f).
(ii)~\emph{the stronger scaling is specific to weight sharing}: for non-shared deep stacks, $1/\!\sqrt{L}$ scaling suffices (Figure~\ref{fig:energy}, top row).
(iii)~\emph{the mechanism is constructive accumulation}: in non-shared stacks, pairwise cosine similarities between residual updates are near-zero off-diagonal (Figure~\ref{fig:corr}, panel~a); in looped networks, the shared weights produce dense positive correlations, so updates reinforce rather than cancel (panel~b), matching the prediction of Theorem~\ref{thm:alpha1}.
(iv)~\emph{per-step output update tracks the leading $\eta\varepsilon N$ trend}: with a shared base learning rate, the output update grows as $N$ for $\varepsilon{=}1$ and as $\sqrt{N}$ for $\varepsilon{=}1/\!\sqrt{N}$, tracking the leading $\eta\varepsilon N$ trend (Figure~\ref{fig:lr-step-update}).
Theorem~\ref{thm:lr} proves a uniform sufficient bound in the stable regime $\varepsilon N=O(1)$; the unscaled and square-root curves are empirical extrapolations of the same leading term outside the theorem's assumptions.
For $\varepsilon{=}1/N$, the update is not strictly constant---it grows mildly before saturating near $N{=}32$---but stays within a small range across the full $N{=}1$--$64$ sweep, consistent with the theorem providing an $O(\cdot)$ upper bound rather than a tight scaling (Appendix~\ref{app:lr}).

\subsection{Learning-rate transfer across loop counts}
\label{sec:lm-transfer}

\paragraph{Protocol.}
We train $L{=}12$ looped Transformers on FineWeb-Edu~\citep{penedo2024fineweb} (10B tokens) with loop counts $N\in\{1,2,4,8\}$.
For each $N$, we sweep learning rates around a base value tuned at $N{=}1$ and compare sqrt vs.\ linear residual scaling.
Optimizer, schedule, and full model configuration are in Appendix Table~\ref{tab:lm-model-config}. 
We evaluate the validation loss on the FineWeb-Edu held-out split, evaluated at the end of training.

\paragraph{Findings.}

(i)~\emph{improved performance}: at $N=8$, linear scaling achieves a lower minimum loss than sqrt scaling (a reduction of $0.025$ nats), indicating better trainability at larger loop counts.
(ii)~\emph{predictable transfer}: the optimal learning rate for linear scaling remains nearly invariant across $N$ (Figure~\ref{fig:lr-transfer}b), consistent with our theoretical prediction ($\eta \propto N^0$); under sqrt scaling, the optimum shifts with $N$ (Figure~\ref{fig:lr-transfer}a), requiring separate tuning for each loop count.
Note that Theorem~\ref{thm:lr} requires $\varepsilon N = O(1)$ (Assumption~3); sqrt scaling ($\alpha{=}1/2$) violates this premise---forward-pass norms still grow with $N$ (Figure~\ref{fig:energy}e)---so the theorem does not predict the direction or magnitude of the optimum shift in that regime.

\subsection{Joint transfer across depth and loop count}
\label{sec:depth-scaling}

\paragraph{Protocol.}
We train the $L{=}24$ and $L{=}48$ members of the same model family under the same setup as Section~\ref{sec:lm-transfer}, sweeping over learning rates and loop counts $N\in\{1,2,4,8\}$.
Following the theoretical prediction $\eta\lesssim 1/(\lambda\sqrt{L})$, we scale the learning rate by $m_L^{-1/2}=(L/12)^{-1/2}$ relative to the $L{=}12$ baseline (Appendix~\ref{app:completep-loop-table}).

\paragraph{Findings.}
(i)~\emph{loop transfer at larger depth}: at both $L{=}24$ and $L{=}48$, the optimal learning rate remains nearly invariant across $N$ (Figure~\ref{fig:lr-transfer}c,d).
(ii)~\emph{depth transfer}: after applying the depth correction, a single base learning rate is near-optimal across all three depths, confirming that the factorized parameterization decouples the loop and depth axes.
For $L{=}48$ with $N{=}8$, base learning rates above $2\!\times\!10^{-3}$ cause divergence, consistent with the tighter stability margin at large effective depth.

\subsection{Residual scaling beyond initialization}
\label{sec:post-training}

\paragraph{Protocol.}
The theoretical analysis characterizes residual-stream behavior at initialization.
We test whether two empirical signatures of the mechanism persist after full training (step 20{,}000) for the $L\in\{12,24,48\}$ models with $N{=}8$.
First, we compute the pairwise cosine similarity between all pairs of loop-step increments $\delta_n = h_n - h_{n-1}$, forming an $N \times N$ correlation matrix (Figure~\ref{fig:update-cosine}).
Second, we trace the L2 norm of the residual stream at every loop step from step~0 (post-embedding) through step~$N$, for each $N\in\{2,4,8\}$ (Figure~\ref{fig:hidden-norm}).

\paragraph{Findings.}
(i)~\emph{correlation weakens but remains positive}: compared to initialization (Figure~\ref{fig:corr}), the pairwise cosine similarities between loop-step increments decrease after full training, but the correlation matrix remains positive for most loop-step pairs across all three depths, with early and middle iterations concentrated in the 0.3--0.6 range; only the latest steps show near-zero or mildly negative values (Figure~\ref{fig:update-cosine}). 
A similar early/middle-step positive-correlation pattern holds at $N{=}4$, while pairs involving the final step can again be weak or negative; see Appendix~\ref{app:cosine-small-N}.
(ii)~\emph{consistent final norm across loop counts}: although different $N$ values lead to different per-step increments in the residual stream, the final norm after the last loop step converges to roughly the same value within each depth group, with only modest variation across $N$ (Figure~\ref{fig:hidden-norm}), indicating that $1/N$ scaling produces similar training dynamics regardless of loop count.
\section{Conclusion}

We showed that weight sharing in looped Transformers requires $1/N$ residual scaling, and derived a parameterization that makes the stable learning rate independent of the loop count.
Experiments on language modeling confirm that this recipe transfers hyperparameters across both loop counts and depths without additional tuning.
By decoupling the loop count from optimization, our results open the possibility of treating weight reuse as a freely adjustable scaling axis.
Natural next steps include scaling up to production-sized models and extending the theory to Post-Norm architectures~\citep{DBLP:conf/icml/XiongYHZZXZLWL20}, where the interaction between normalization placement and weight sharing may yield a different scaling regime.
\section*{Limitations}

Our theoretical analysis models the looped block as a shared-weight MLP, abstracting away the multi-head attention mechanism present in Transformers.
The formal results further assume ReLU activations, whereas our experiments use SwiGLU; the predicted $\Theta(N^2)$ growth and $1/N$ scaling nonetheless hold empirically (Figures~\ref{fig:corr},~\ref{fig:update-cosine}).
More broadly, the analysis does not account for optimizer state dynamics, normalization variants, or data-dependent feature learning.

On the empirical side, our experiments cover a finite set of model sizes, loop counts, and training budgets.
Validation on larger models, longer training horizons, and heterogeneous multi-stage looped architectures remains future work.
In addition, each reported language-modeling result uses a single random seed; multi-seed replicas of the full configuration sweep were not conducted due to computational constraints.

\section*{Ethical Considerations}

This work is a methodological study of residual scaling for weight-tied Transformers. It involves no human subjects and no personal or sensitive data. All language-modeling experiments use the publicly released FineWeb-Edu corpus~\citep{penedo2024fineweb}, which is derived from open web text and inherits the well-known biases, factual inaccuracies, and content distribution of large-scale web crawls; any model trained on such data carries those risks. We do not release new model weights, datasets, or downstream applications.

The largest model trained in this work has $438$M parameters and is trained for $10$B tokens, which is small relative to current frontier systems but still consumes meaningful compute. We report all hyperparameters and the exact sweep configuration (Appendix~\ref{app:hparam-values}) so that other researchers can replicate the experiments without redundant tuning.

\section*{Acknowledgments}
The authors are supported in part  by the National Natural Science Foundation of China Grant 04130200126.

\clearpage
\bibliographystyle{plainnat}

\begin{thebibliography}{31}
\providecommand{\natexlab}[1]{#1}
\providecommand{\url}[1]{\texttt{#1}}
\expandafter\ifx\csname urlstyle\endcsname\relax
  \providecommand{\doi}[1]{doi: #1}\else
  \providecommand{\doi}{doi: \begingroup \urlstyle{rm}\Url}\fi

\bibitem[Bachlechner et~al.(2021)Bachlechner, Majumder, Mao, Cottrell, and
  McAuley]{bachlechner2020rezero}
Thomas Bachlechner, Bodhisattwa~Prasad Majumder, Huanru~Henry Mao, Gary
  Cottrell, and Julian~J. McAuley.
\newblock {ReZero} is all you need: fast convergence at large depth.
\newblock In Cassio~P. de~Campos, Marloes~H. Maathuis, and Erik Quaeghebeur,
  editors, \emph{Proceedings of the Thirty-Seventh Conference on Uncertainty in
  Artificial Intelligence, {UAI} 2021, Virtual Event, 27-30 July 2021},
  Proceedings of Machine Learning Research, pages 1352--1361. {AUAI} Press,
  2021.
\newblock URL \url{https://proceedings.mlr.press/v161/bachlechner21a.html}.

\bibitem[Bordelon et~al.(2024)Bordelon, Noci, Li, Hanin, and
  Pehlevan]{bordelon2024depthwise}
Blake Bordelon, Lorenzo Noci, Mufan~Bill Li, Boris Hanin, and Cengiz Pehlevan.
\newblock Depthwise hyperparameter transfer in residual networks: Dynamics and
  scaling limit.
\newblock In \emph{The Twelfth International Conference on Learning
  Representations, {ICLR} 2024, Vienna, Austria, May 7-11, 2024}.
  OpenReview.net, 2024.
\newblock URL \url{https://openreview.net/forum?id=KZJehvRKGD}.

\bibitem[Boucheron et~al.(2013)Boucheron, Lugosi, and
  Massart]{DBLP:books/daglib/0035704}
St{\'{e}}phane Boucheron, G{\'{a}}bor Lugosi, and Pascal Massart.
\newblock \emph{Concentration Inequalities - {A} Nonasymptotic Theory of
  Independence}.
\newblock Oxford University Press, 2013.
\newblock ISBN 978-0-19-953525-5.
\newblock \doi{10.1093/ACPROF:OSO/9780199535255.001.0001}.
\newblock URL \url{https://doi.org/10.1093/acprof:oso/9780199535255.001.0001}.

\bibitem[Chen et~al.(2026)Chen, Li, Liang, Lao, and Liu]{chen2026training}
Lizhang Chen, Jonathan Li, Chen Liang, Ni~Lao, and Qiang Liu.
\newblock Training-free looped transformers.
\newblock \emph{arXiv preprint arXiv:2605.23872}, 2026.
\newblock \doi{10.48550/arXiv.2605.23872}.

\bibitem[Chen et~al.(2018)Chen, Rubanova, Bettencourt, and
  Duvenaud]{DBLP:conf/nips/ChenRBD18}
Tian~Qi Chen, Yulia Rubanova, Jesse Bettencourt, and David Duvenaud.
\newblock Neural ordinary differential equations.
\newblock In Samy Bengio, Hanna~M. Wallach, Hugo Larochelle, Kristen Grauman,
  Nicol{\`{o}} Cesa{-}Bianchi, and Roman Garnett, editors, \emph{Advances in
  Neural Information Processing Systems 31: Annual Conference on Neural
  Information Processing Systems 2018, NeurIPS 2018, December 3-8, 2018,
  Montr{\'{e}}al, Canada}, pages 6572--6583, 2018.
\newblock URL
  \url{https://proceedings.neurips.cc/paper/2018/hash/69386f6bb1dfed68692a24c8686939b9-Abstract.html}.

\bibitem[Dehghani et~al.(2019)Dehghani, Gouws, Vinyals, Uszkoreit, and
  Kaiser]{dehghani2018universal}
Mostafa Dehghani, Stephan Gouws, Oriol Vinyals, Jakob Uszkoreit, and Lukasz
  Kaiser.
\newblock Universal transformers.
\newblock In \emph{7th International Conference on Learning Representations,
  {ICLR} 2019, New Orleans, LA, USA, May 6-9, 2019}. OpenReview.net, 2019.
\newblock URL \url{https://openreview.net/forum?id=HyzdRiR9Y7}.

\bibitem[Dey et~al.(2025)Dey, Zhang, Noci, Li, Bordelon, Bergsma, Pehlevan,
  Hanin, and Hestness]{dey2025completep}
Nolan Dey, Bin~Claire Zhang, Lorenzo Noci, Mufan Li, Blake Bordelon, Shane
  Bergsma, Cengiz Pehlevan, Boris Hanin, and Joel Hestness.
\newblock Don't be lazy: {CompleteP} enables compute-efficient deep
  transformers.
\newblock 2025.
\newblock \doi{10.48550/ARXIV.2505.01618}.
\newblock URL \url{https://arxiv.org/abs/2505.01618}.

\bibitem[Fan et~al.(2025)Fan, Du, Ramchandran, and Lee]{fan2024looped}
Ying Fan, Yilun Du, Kannan Ramchandran, and Kangwook Lee.
\newblock Looped transformers for length generalization.
\newblock In \emph{The Thirteenth International Conference on Learning
  Representations, {ICLR} 2025, Singapore, April 24-28, 2025}. OpenReview.net,
  2025.
\newblock URL \url{https://openreview.net/forum?id=2edigk8yoU}.

\bibitem[Gatmiry et~al.(2024)Gatmiry, Saunshi, Reddi, Jegelka, and
  Kumar]{gatmiry2024multistep}
Khashayar Gatmiry, Nikunj Saunshi, Sashank~J. Reddi, Stefanie Jegelka, and
  Sanjiv Kumar.
\newblock Can looped transformers learn to implement multi-step gradient
  descent for in-context learning?
\newblock In Ruslan Salakhutdinov, Zico Kolter, Katherine~A. Heller, Adrian
  Weller, Nuria Oliver, Jonathan Scarlett, and Felix Berkenkamp, editors,
  \emph{Forty-first International Conference on Machine Learning, {ICML} 2024,
  Vienna, Austria, July 21-27, 2024}, Proceedings of Machine Learning Research,
  pages 15130--15152. {PMLR} / OpenReview.net, 2024.
\newblock URL \url{https://proceedings.mlr.press/v235/gatmiry24b.html}.

\bibitem[Gordon(1988)]{gordon1988milman}
Yehoram Gordon.
\newblock On {Milman's} inequality and random subspaces which escape through a
  mesh in {$\mathbb{R}^n$}.
\newblock In \emph{Geometric Aspects of Functional Analysis}, volume 1317 of
  \emph{Lecture Notes in Mathematics}, pages 84--106. Springer, 1988.
\newblock \doi{10.1007/BFb0081737}.

\bibitem[Hayou and Yang(2023)]{hayou2023commute}
Soufiane Hayou and Greg Yang.
\newblock Width and depth limits commute in residual networks.
\newblock In Andreas Krause, Emma Brunskill, Kyunghyun Cho, Barbara Engelhardt,
  Sivan Sabato, and Jonathan Scarlett, editors, \emph{International Conference
  on Machine Learning, {ICML} 2023, 23-29 July 2023, Honolulu, Hawaii, {USA}},
  Proceedings of Machine Learning Research, pages 12700--12723. {PMLR}, 2023.
\newblock URL \url{https://proceedings.mlr.press/v202/hayou23a.html}.

\bibitem[Lan et~al.(2020)Lan, Chen, Goodman, Gimpel, Sharma, and
  Soricut]{lan2019albert}
Zhenzhong Lan, Mingda Chen, Sebastian Goodman, Kevin Gimpel, Piyush Sharma, and
  Radu Soricut.
\newblock {ALBERT:} {A} lite {BERT} for self-supervised learning of language
  representations.
\newblock In \emph{8th International Conference on Learning Representations,
  {ICLR} 2020, Addis Ababa, Ethiopia, April 26-30, 2020}. OpenReview.net, 2020.
\newblock URL \url{https://openreview.net/forum?id=H1eA7AEtvS}.

\bibitem[{Llama Team}(2024)]{dubey2024llama}
{Llama Team}.
\newblock The llama 3 herd of models.
\newblock \emph{CoRR}, abs/2407.21783, 2024.
\newblock \doi{10.48550/ARXIV.2407.21783}.
\newblock URL \url{https://doi.org/10.48550/arXiv.2407.21783}.

\bibitem[Loshchilov and Hutter(2019)]{DBLP:conf/iclr/LoshchilovH19}
Ilya Loshchilov and Frank Hutter.
\newblock Decoupled weight decay regularization.
\newblock In \emph{7th International Conference on Learning Representations,
  {ICLR} 2019, New Orleans, LA, USA, May 6-9, 2019}. OpenReview.net, 2019.
\newblock URL \url{https://openreview.net/forum?id=Bkg6RiCqY7}.

\bibitem[Marion(2023)]{marion2023generalization}
Pierre Marion.
\newblock Generalization bounds for neural ordinary differential equations and
  deep residual networks.
\newblock In \emph{Advances in Neural Information Processing Systems 36: Annual
  Conference on Neural Information Processing Systems 2023, NeurIPS 2023, New
  Orleans, LA, USA, December 10 - 16, 2023}, 2023.
\newblock URL
  \url{http://papers.nips.cc/paper\_files/paper/2023/hash/98ed250b203d1ac6b24bbcf263e3d4a7-Abstract-Conference.html}.

\bibitem[Marion et~al.(2025)Marion, Fermanian, Biau, and
  Vert]{marion2022scalingresnets}
Pierre Marion, Adeline Fermanian, G{\'{e}}rard Biau, and Jean{-}Philippe Vert.
\newblock Scaling resnets in the large-depth regime.
\newblock \emph{J. Mach. Learn. Res.}, 26:\penalty0 56:1--56:48, 2025.
\newblock URL \url{https://jmlr.org/papers/v26/22-0664.html}.

\bibitem[Mlodozeniec et~al.(2025)Mlodozeniec, Ablin, Béthune, Busbridge,
  Klein, Ramapuram, and Cuturi]{mlodozeniec2025completed}
Bruno Mlodozeniec, Pierre Ablin, Louis Béthune, Dan Busbridge, Michal Klein,
  Jason Ramapuram, and Marco Cuturi.
\newblock Completed hyperparameter transfer across modules, width, depth, batch
  and duration.
\newblock 2025.
\newblock URL \url{https://arxiv.org/abs/2512.22382}.

\bibitem[Ng and Wang(2024)]{ng2024loopnn}
Kei-Sing Ng and Qingchen Wang.
\newblock Loop neural networks for parameter sharing.
\newblock 2024.
\newblock URL \url{https://arxiv.org/abs/2409.14199}.

\bibitem[Penedo et~al.(2024)Penedo, Kydl{\'{\i}}cek, Allal, Lozhkov, Mitchell,
  Raffel, von Werra, and Wolf]{penedo2024fineweb}
Guilherme Penedo, Hynek Kydl{\'{\i}}cek, Loubna~Ben Allal, Anton Lozhkov,
  Margaret Mitchell, Colin~A. Raffel, Leandro von Werra, and Thomas Wolf.
\newblock The {FineWeb} datasets: Decanting the web for the finest text data at
  scale.
\newblock In \emph{Advances in Neural Information Processing Systems 38: Annual
  Conference on Neural Information Processing Systems 2024, NeurIPS 2024,
  Vancouver, BC, Canada, December 10 - 15, 2024}, 2024.
\newblock URL
  \url{http://papers.nips.cc/paper\_files/paper/2024/hash/370df50ccfdf8bde18f8f9c2d9151bda-Abstract-Datasets\_and\_Benchmarks\_Track.html}.

\bibitem[Press and Wolf(2017)]{DBLP:conf/eacl/PressW17}
Ofir Press and Lior Wolf.
\newblock Using the output embedding to improve language models.
\newblock In Mirella Lapata, Phil Blunsom, and Alexander Koller, editors,
  \emph{Proceedings of the 15th Conference of the European Chapter of the
  Association for Computational Linguistics, {EACL} 2017, Valencia, Spain,
  April 3-7, 2017, Volume 2: Short Papers}, pages 157--163. Association for
  Computational Linguistics, 2017.
\newblock \doi{10.18653/V1/E17-2025}.
\newblock URL \url{https://doi.org/10.18653/v1/e17-2025}.

\bibitem[Saunshi et~al.(2025)Saunshi, Dikkala, Li, Kumar, and
  Reddi]{saunshi2025latent}
Nikunj Saunshi, Nishanth Dikkala, Zhiyuan Li, Sanjiv Kumar, and Sashank~J.
  Reddi.
\newblock Reasoning with latent thoughts: On the power of looped transformers.
\newblock In \emph{The Thirteenth International Conference on Learning
  Representations, {ICLR} 2025, Singapore, April 24-28, 2025}. OpenReview.net,
  2025.
\newblock URL \url{https://openreview.net/forum?id=din0lGfZFd}.

\bibitem[Touvron et~al.(2023)Touvron, Lavril, Izacard, Martinet, Lachaux,
  Lacroix, Rozi{\`{e}}re, Goyal, Hambro, Azhar, Rodriguez, Joulin, Grave, and
  Lample]{DBLP:journals/corr/abs-2302-13971}
Hugo Touvron, Thibaut Lavril, Gautier Izacard, Xavier Martinet, Marie{-}Anne
  Lachaux, Timoth{\'{e}}e Lacroix, Baptiste Rozi{\`{e}}re, Naman Goyal, Eric
  Hambro, Faisal Azhar, Aur{\'{e}}lien Rodriguez, Armand Joulin, Edouard Grave,
  and Guillaume Lample.
\newblock {LLaMA}: Open and efficient foundation language models.
\newblock \emph{CoRR}, abs/2302.13971, 2023.
\newblock \doi{10.48550/ARXIV.2302.13971}.
\newblock URL \url{https://doi.org/10.48550/arXiv.2302.13971}.

\bibitem[Vaswani et~al.(2017)Vaswani, Shazeer, Parmar, Uszkoreit, Jones, Gomez,
  Kaiser, and Polosukhin]{DBLP:conf/nips/VaswaniSPUJGKP17}
Ashish Vaswani, Noam Shazeer, Niki Parmar, Jakob Uszkoreit, Llion Jones,
  Aidan~N. Gomez, Lukasz Kaiser, and Illia Polosukhin.
\newblock Attention is all you need.
\newblock In Isabelle Guyon, Ulrike von Luxburg, Samy Bengio, Hanna~M. Wallach,
  Rob Fergus, S.~V.~N. Vishwanathan, and Roman Garnett, editors, \emph{Advances
  in Neural Information Processing Systems 30: Annual Conference on Neural
  Information Processing Systems 2017, December 4-9, 2017, Long Beach, CA,
  {USA}}, pages 5998--6008, 2017.
\newblock URL
  \url{https://proceedings.neurips.cc/paper/2017/hash/3f5ee243547dee91fbd053c1c4a845aa-Abstract.html}.

\bibitem[Wang et~al.(2024)Wang, Ma, Dong, Huang, Zhang, and
  Wei]{wang2022deepnet}
Hongyu Wang, Shuming Ma, Li~Dong, Shaohan Huang, Dongdong Zhang, and Furu Wei.
\newblock {DeepNet}: Scaling transformers to 1,000 layers.
\newblock \emph{{IEEE} Trans. Pattern Anal. Mach. Intell.}, 46\penalty0
  (10):\penalty0 6761--6774, 2024.
\newblock \doi{10.1109/TPAMI.2024.3386927}.
\newblock URL \url{https://doi.org/10.1109/TPAMI.2024.3386927}.

\bibitem[Wen et~al.(2024)Wen, Li, Wang, Hall, Liang, and
  Ma]{DBLP:journals/corr/abs-2410-05192}
Kaiyue Wen, Zhiyuan Li, Jason~S. Wang, David Hall, Percy Liang, and Tengyu Ma.
\newblock Understanding warmup-stable-decay learning rates: {A} river valley
  loss landscape perspective.
\newblock \emph{CoRR}, abs/2410.05192, 2024.
\newblock \doi{10.48550/ARXIV.2410.05192}.
\newblock URL \url{https://doi.org/10.48550/arXiv.2410.05192}.

\bibitem[Xiong et~al.(2020)Xiong, Yang, He, Zheng, Zheng, Xing, Zhang, Lan,
  Wang, and Liu]{DBLP:conf/icml/XiongYHZZXZLWL20}
Ruibin Xiong, Yunchang Yang, Di~He, Kai Zheng, Shuxin Zheng, Chen Xing,
  Huishuai Zhang, Yanyan Lan, Liwei Wang, and Tie{-}Yan Liu.
\newblock On layer normalization in the transformer architecture.
\newblock In \emph{Proceedings of the 37th International Conference on Machine
  Learning, {ICML} 2020, 13-18 July 2020, Virtual Event}, Proceedings of
  Machine Learning Research, pages 10524--10533. {PMLR}, 2020.
\newblock URL \url{http://proceedings.mlr.press/v119/xiong20b.html}.

\bibitem[Yang et~al.(2022)Yang, Hu, Babuschkin, Sidor, Liu, Farhi, Ryder,
  Pachocki, Chen, and Gao]{yang2022tensorprogramsV}
Greg Yang, Edward~J. Hu, Igor Babuschkin, Szymon Sidor, Xiaodong Liu, David
  Farhi, Nick Ryder, Jakub Pachocki, Weizhu Chen, and Jianfeng Gao.
\newblock Tensor programs v: Tuning large neural networks via zero-shot
  hyperparameter transfer.
\newblock 2022.
\newblock \doi{10.48550/ARXIV.2203.03466}.
\newblock URL \url{https://arxiv.org/abs/2203.03466}.

\bibitem[Yang et~al.(2024)Yang, Lee, Nowak, and Papailiopoulos]{yang2023looped}
Liu Yang, Kangwook Lee, Robert~D. Nowak, and Dimitris Papailiopoulos.
\newblock Looped transformers are better at learning learning algorithms.
\newblock In \emph{The Twelfth International Conference on Learning
  Representations, {ICLR} 2024, Vienna, Austria, May 7-11, 2024}.
  OpenReview.net, 2024.
\newblock URL \url{https://openreview.net/forum?id=HHbRxoDTxE}.

\bibitem[Zhang and Sennrich(2019)]{DBLP:conf/nips/ZhangS19a}
Biao Zhang and Rico Sennrich.
\newblock Root mean square layer normalization.
\newblock In Hanna~M. Wallach, Hugo Larochelle, Alina Beygelzimer, Florence
  d'Alch{\'{e}}{-}Buc, Emily~B. Fox, and Roman Garnett, editors, \emph{Advances
  in Neural Information Processing Systems 32: Annual Conference on Neural
  Information Processing Systems 2019, NeurIPS 2019, December 8-14, 2019,
  Vancouver, BC, Canada}, pages 12360--12371, 2019.
\newblock URL
  \url{https://proceedings.neurips.cc/paper/2019/hash/1e8a19426224ca89e83cef47f1e7f53b-Abstract.html}.

\bibitem[Zhang et~al.(2019)Zhang, Dauphin, and Ma]{zhang2019fixup}
Hongyi Zhang, Yann~N. Dauphin, and Tengyu Ma.
\newblock Fixup initialization: Residual learning without normalization.
\newblock In \emph{7th International Conference on Learning Representations,
  {ICLR} 2019, New Orleans, LA, USA, May 6-9, 2019}. OpenReview.net, 2019.
\newblock URL \url{https://openreview.net/forum?id=H1gsz30cKX}.

\bibitem[Zhu et~al.(2025)Zhu, Wang, Hua, Zhang, Li, Que, Wei, Wen, Yin, Xing,
  Li, Shi, Ma, Li, Kergan, Smith, Qu, Hui, Wu, Min, Huang, Zhou, Ye, Liu, Yang,
  Shi, Lin, Zhao, Cai, Zhang, Huang, Bengio, and Eshraghian]{zhu2025ouro}
Rui-Jie Zhu, Zixuan Wang, Kai Hua, Tianyu Zhang, Ziniu Li, Haoran Que, Boyi
  Wei, Zixin Wen, Fan Yin, He~Xing, Lu~Li, Jiajun Shi, Kaijing Ma, Shanda Li,
  Taylor Kergan, Andrew Smith, Xingwei Qu, Mude Hui, Bohong Wu, Qiyang Min,
  Hongzhi Huang, Xun Zhou, Wei Ye, Jiaheng Liu, Jian Yang, Yunfeng Shi,
  Chenghua Lin, Enduo Zhao, Tianle Cai, Ge~Zhang, Wenhao Huang, Yoshua Bengio,
  and Jason Eshraghian.
\newblock Scaling latent reasoning via looped language models.
\newblock 2025.
\newblock \doi{10.48550/ARXIV.2510.25741}.
\newblock URL \url{https://arxiv.org/abs/2510.25741}.

\end{thebibliography}

\beginappendix
\seedinputappendix

\end{document}